\newcommand{\std}[1]{{\tiny\(\pm\)#1}}
\newtheorem{theorem}{Theorem}
\newtheorem{definition}{Definition}
\newtheorem{lemma}{Lemma}
\begin{document}
\title{Attention Spiking Neural Networks}
\author{Man~Yao, Guangshe~Zhao, Hengyu~Zhang, Yifan~Hu, Lei~Deng, Yonghong~Tian, Bo~Xu, and Guoqi~Li

\IEEEcompsocitemizethanks{\IEEEcompsocthanksitem M. Yao is with the School of Automation Science and Engineering, Xi'an Jiaotong University, Xi'an, Shaanxi, China, and also with Peng Cheng Laboratory, China.\protect
\IEEEcompsocthanksitem G. Zhao is with the School of Automation Science and Engeneering, Xi'an Jiaotong University, Xi'an, Shaanxi, China.\protect
\IEEEcompsocthanksitem H. Zhang is with Tsinghua Shenzhen International Graduate School, Tsinghua University, Shenzhen, China.\protect
\IEEEcompsocthanksitem Y. Hu, and L. Deng are with Center for Brain-Inspired Computing Research, Department of Precision Instrument, Tsinghua University, Beijing, China.\protect
\IEEEcompsocthanksitem Y. Tian is with Institute for Artificial Intelligence, Peking University, Beijing, China, and also with Peng Cheng Laboratory, China\protect
\IEEEcompsocthanksitem B. Xu and G. Li are with Institute of Automation, Chinese Academy of Sciences, Beijing, China.\protect\\
The corresponding author: Guoqi Li (E-mail:guoqi.li@ia.ac.cn).\protect}

}
\markboth{FOR REVIEW}%
{Shell \MakeLowercase{\textit{et al.}}: Bare Demo of IEEEtran.cls for Computer Society Journals
}


\IEEEtitleabstractindextext{%
\begin{abstract}
Benefiting from the event-driven nature and sparse spiking communication of the brain, spiking neural networks (SNNs) are becoming a promising energy-efficient alternative to traditional artificial neural networks (ANNs). However, the performance gap between SNNs and ANNs has been a significant hindrance to deploying SNNs ubiquitously for a very long period of time. To leverage the full potential of SNNs, we study the effect of attention mechanisms in SNNs, which makes they can focus on important information. We first present our idea of attention in SNNs with a plug-and-play combined module kit, termed the Multi-dimensional Attention (MA) module. Then, a new attention SNN architecture with end-to-end training called "MA-SNN" is proposed, which infers attention weights along the temporal dimension, channel dimension, as well as spatial dimension separately or simultaneously. Based on the existing neuroscience theories, we exploit the attention weights to optimize membrane potentials, which in turn regulate the spiking response in a data-dependent way. At the cost of negligible additional parameters, MA facilitates vanilla SNNs to achieve sparser spiking activity, better performance, and energy efficiency concurrently. Experiments are conducted in event-based DVS128 Gesture/Gait action recognition and ImageNet-1k image classification. On Gesture/Gait, the spike counts are reduced by 84.9\%/81.6\%, the task accuracy and energy efficiency are improved by 5.9\%/4.7\% and 3.4$\times$/3.2$\times$. On ImageNet-1K, we achieve top-1 accuracy of 75.92\% and 77.08\% on single/4-step Res-SNN-104, which are state-of-the-art results in SNNs. Compared with counterpart Res-ANN-104, the performance gap becomes -0.95/+0.21 percent and has 31.8$\times$/7.4$\times$ better energy efficiency. To our best knowledge, this is for the first time, that the SNN community achieves comparable or even better performance compared with its ANN counterpart in the large-scale dataset. To analyze and support the effectiveness of MA-SNN, we theoretically prove that the spiking degradation or the gradient vanishing, which usually holds in general SNNs, can be resolved by introducing the block dynamical isometry theory. We also analyze the efficiency of MA-SNN based on our proposed spiking response visualization method. Our work lights up SNN's potential as a general backbone to support various applications in the field of SNN research, with a great balance between effectiveness and efficiency.
\end{abstract}

\begin{IEEEkeywords}
Spiking neural network, Attention mechanism, Neuromorphic computing, Efficient neuromorphic inference
\end{IEEEkeywords}}

\maketitle
\IEEEdisplaynontitleabstractindextext
\IEEEpeerreviewmaketitle

\IEEEraisesectionheading{\section{Introduction}\label{sec:introduction}}

\IEEEPARstart{A}{s} the most remarkable neural network, the human brain is incredibly efficient and capable of performing complex pattern recognition tasks, and has always been a source of innovation for artificial neural networks (ANNs) or conventional deep learning models\cite{simonyan_2014_very_deep,he_resnet_2016}. In the recent past, by reasonably emulating the deep hierarchy structure of the visual cortex, deep ANNs obtained powerful representation and brought amazing successes in a myriad of artificial intelligence applications, e.g., compute vision (CV)\cite{krizhevsky2012imagenet}, natural language processing (NLP)\cite{hirschberg2015advances}, medical diagnosis\cite{esteva2017dermatologist}, game playing\cite{silver2016mastering}, etc. Unfortunately, ANNs pay enormous computational costs to achieve such feats. For example, a standard computer performing only recognition among 1,000 different kinds of objects (ImageNet-1K dataset) expends about 250 watts\cite{Nature_2}. By contrast, the human brain can operate with only nearly 20 watts consumption for various impressive achievements (such as simultaneous recognition, reasoning, control, and movement)\cite{laughlin2003communication}. Many real-world platforms, e.g., smartphones, Internet-of-Things devices among others, have resources and battery constraints, which restrict the implementation of deep ANN\cite{sze2017efficient}. To enable intelligence on such platforms, how to exploit the inherent efficient computation paradigm of the biological neural systems to achieve low-power of implementation of neural networks, is of great value.

Spiking neural networks (SNNs) offer an alternative for enabling energy-efficient intelligence, which emulate biological neuronal functionality by adopting binary spiking signals (0-nothing or 1-spiking event) to complete inter-neuron communication\cite{Maass_1997_LIF}. As a kind of neuromorphic computing algorithm, SNNs can be smoothly executed on the sparse neuromorphic chip, by only handling spike-based accumulate (AC) operations, and can avoid computing the zero values of input or activation (i.e., \emph{event-driven})\cite{Nature_2}. Thus, SNNs consume much lower power than ANNs that are dominated by energy-hungry multiply-and-accumulate (MAC) operations on conventional dense computing hardware such as GPUs. With the release of neuromorphic chips like Tianjic\cite{Nature_1}, TrueNorth\cite{2014TrueNorth}, and Loihi\cite{davies2018loihi}, we are not very far from neuromorphic processors becoming a part of everyday life. 

It remains a challenge to directly train large-scale SNNs to achieve comparable performance with counterpart ANNs for real-world pattern recognition tasks. A recent study of directly training builds advancing residual learning to construct large-scale SNNs, and alleviates the performance gap between deep SNNs and ANNs\cite{Hu_2021_MS}. However, the performance gap still exists. On the other hand, computation over multi-time steps\footnote {A time step is the unit of time taken by each input frame to be processed through all layers of the model.} in deep SNNs\cite{zheng_Going_Deeper_SNN_2021,fang_deep_SNN_2021,Hu_2021_MS} not only boosts the training time and simulation hardware costs, but also incurs high inference latency, more overall energy budget, and memory access overhead of fetching membrane potentials. These limitations prohibit the potential effective algorithm design and lessen the energy benefits of SNNs. To break the ice, we urgently need to take novel inspiration from how the brain works and classic deep learning to build more effective and efficient SNNs.

Humans can naturally and effectively find salient regions in complex scenes\cite{itti_1998_human_attention_1}. Motivated by this observation, attention mechanisms have been introduced into deep learning and achieved remarkable success in a wide spectrum of application domains. Current attention in deep learning generally exists in two ways. One is posing a fundamental paradigm shift in the way of executing meta-operator such as using self-attention conduct Transformer\cite{vaswani_2017_attention_is_all_you_need}. The other prefers integrating with the existing classic deep ANNs that work as auxiliary recalibration modules to increase the representation power of the basic model, such as attention convolutional neural network (attention CNNs)\cite{SE_PAMI}. Recently, apart from the classic application in the NLP, the Transformer structure made its grand debut in the CV, and quickly set off an overwhelming wave of pure attention architecture design by its glaring performance in various tasks\cite{liu2021swin}. The success of self-attention facilitates researchers' understanding of different deep learning architectures, including Transformer, CNN, multi-layer perceptron (MLP), etc., and sparks more effective universal network architecture design\cite{2022_transformer_survey}. Moreover, the above two attention practices can be combined together, such as using attention as an independent auxiliary module for the Transformer to focus on informative features or patches\cite{yuan2021tokens}.  

In contrast to the rapid development of attention mechanisms in ANNs, the application of attention in the SNN domain remains to be exploited. Existing few works are totally different from the aforementioned attention practices in traditional deep learning. They focus on using SNN to simulate the attention mechanism\cite{chevallier_2008_Attention_SNN_1,neokleous_2011_Attention_SNN_2} or executing SNN model compression by attention\cite{kundu_2021_spike_thrift}. We do not intend to shift the meta-operator of existing SNNs, e.g., replacing convolution (or fully connected) with self-attention, but try to apply the attention as an auxiliary unit in a simple and lightweight way to easily integrate with existing SNN architectures for improving representation power, like attention CNNs. Challenges in adapting attention to SNNs arise from three aspects. Firstly, we must keep the neuromorphic computing characteristic of SNNs, which is the basis of SNN's energy efficiency. Thus, implementing the attention while retaining SNN's event-driven is the primary consideration. Secondly, SNNs are used to process various applications, such as sequential event streams and static images. We need to diverse attention SNN design to cope with different scenarios. Thirdly, binary spiking activity makes deep SNNs suffer from spike degradation\cite{zheng_Going_Deeper_SNN_2021} and gradient vanishing\cite{Hu_2021_MS}, collectively referred to as the \emph{degradation problem}, i.e., an accuracy drop would occur on both the training and test sets when the network deepens. Attention should not make the case worse. 

In visual neuroscience, attention enhances neuronal communication efficacy by modulating synaptic weights\cite{briggs2013attention} and neuronal spiking activity rate\cite{spitzer1988increased} in the noisy sensory environment. To emulate attention in the brain, we employ attention to facilitate optimizing the membrane potential of spiking neurons, which can be equivalent to synaptic alteration and would not disrupt the event-driven nature of SNNs. Our design philosophy is clear, exploiting attention to regulate membrane potentials, i.e., focusing on important features and suppressing unnecessary ones, which in turn affects the spiking activity. In contrast, attention is applied to refine activations in CNNs\cite{SE_PAMI}. The underlying reason is that neurons in CNNs communicate with each other using activations coded in continuous values rather than brain-like spiking activations. To adapt attention SNNs to a variety of application scenarios, we merge multi-dimensional attention with SNN (MA-SNN), including \emph{temporal}, \emph{channel}, and \emph{spatial} dimensions, to learn 'when', 'what' and 'where' to attend, respectively. These attention dimensions are exploited separately or simultaneously according to specific task metric requirements such as latency, accuracy, and energy cost. Classic convolutional block attention module (CBAM)\cite{CBAM} is adopted as the basic module to construct MA-SNN. Furthermore, attention residual SNNs are designed to process the large-scale ImageNet-1K. We exploit the MS-Res-SNN\cite{Hu_2021_MS} as the backbone because of its higher accuracy and shortcut connection manner. We argue that membrane-shortcut in MS-Res-SNN is identical to our motivation for introducing the attention, which can also be seen as a way to optimize the membrane potentials. 

The advantages of MA-SNN exist in three folds. Firstly, by emulating attention in brain, we propose the MA-SNN. Extensive experimental results on various tasks show that optimization of membrane potential in a data-dependent manner by attention can lead to sparser spiking responses and incurs better performance and energy efficiency concurrently, like the human brain. Secondly, we uncover the attention mechanism in MA-SNN. We answer one key problem: how can both effectiveness and energy efficiency be achieved simultaneously in MA-SNN. To address this issue, a new spiking response visualization method is proposed to observe the effect of attention-optimized membrane potential on spiking response. We show that the effectiveness of MA-SNN mainly stems from the proper focusing, just the same as previous CNN works\cite{zhou_2016_CAM,park_2020_attention_bam,SE_PAMI,SimAM}. At the efficiency aspect, MA adaptively inhibits the membrane potentials of the background noise, then these spiking neurons would not be activated. With this point of view, we could explain why a much lower spiking activity rate can be achieved in attention SNN with great energy efficiency. Thirdly, we prove that the degradation problem, which holds in general deep SNNs, can be resolved when adding attention to MS-Res-SNN (i.e., Att-Res-SNN). Specifically, we prove the gradient norm equality\cite{chen2020comprehensive} can be achieved in our attention residual learning by introducing the block dynamical isometry theory, which means that one could train very deep Att-Res-SNN in the same way as in MS-Res-SNN. To summarize, the main contributions of this work are as follows:

\begin{enumerate}
\item [$\bullet$] 
\textbf{Multi-dimensional Attention SNN:} Inspired by the attention mechanisms in neuroscience, we present our idea of attention SNN and propose the MA-SNN, which merges multi-dimensional attention with SNN and inherits the event-driven nature, including temporal, channel, and spatial dimensions to learn 'when', 'what', and 'where' to attend. The sparse spiking activity, performance, and energy efficiency of MA-SNN are verified on the multiple benchmarks under the multi-scale constraints of output latency. Based on the proposed model, now the SNN community is able to achieve comparable or even better performance compared with its ANN counterpart in the large-scale dataset. 
\item [$\bullet$]
\textbf{Understanding and Visualizing of Attention:} Through the proposed spiking response visualization method, it is shown that the effectiveness of MA-SNN mainly stems from proper focusing, and efficiency comes from the improvement of sparsity by inhibiting the membrane potentials of the background noise. Thus, we explain why both the effectiveness and efficiency of attention can be achieved concurrently in SNNs. 
\item [$\bullet$] 
\textbf{Gradient Norm Equality of Att-Res-SNN:} We prove the gradient norm equality\cite{chen2020comprehensive} can be achieved in Att-Res-SNN based on the block dynamical isometry theory. The degradation problem that holds in general deep SNNs can then be resolved when adding attention to MS-Res-SNN. Thus, we are able to train very deep Att-Res-SNN to enhance the potential of SNNs.
\end{enumerate}   

The rest of the paper is organized as follows. Section~\ref{section:relate_work} reports preliminaries. Section~\ref{section:MA-SNN} introduces our MA-SNN. Section~\ref{section:energy_analysis} gives how to evaluate the energy cost of attention SNNs. Section~\ref{section:Experiment} verifies the effectiveness and efficiency of our methods. Section~\ref{section:ablation_study} conducts ablation studies to comprehend the design of MA-SNN. Section~\ref{section:visualizing_attention} understands and visualizes the effectiveness and efficiency of attention SNNs. Section~\ref{section:conclusion} concludes this work.

\section{Preliminaries}\label{section:relate_work}
\textbf{Training Methods of SNNs.} ANN-to-SNN conversion and directly training an SNN are two main routines to train deep SNNs. The basic idea of ANN-to-SNN is that the activation values in a ReLU-based ANN can be approximated by the average ﬁring rates of an SNN under the rate-coding scheme. There is a trade-off issue of accuracy and latency in ANN-to-SNN methods that need sufficient time steps for rate-coding to alleviate approximation errors\cite{wu_2021_progressive}. Although the converted SNN can obtain the smallest accuracy gap with ANN in some large-scale structures, such as VGG and ResNet\cite{stockl2021optimized,li_2021_ann2SNN}, they need a longer time step or complicated training methods, which increases the SNN’s latency and restricts the practical application. Directly training an SNN is another training mode of SNN, which constitutes a continuous relaxation of the non-smooth spiking to enable backpropagation with a surrogate gradient\cite{Neftci_SG_2019}. Compared with ANN-to-SNN, it has a great advantage in the number of time steps and can also be applied to temporal tasks, e.g., event-based datasets. Direct training algorithms are diverse in the selection of coding schemes such as time-coding\cite{comsa2020temporal} and rate-coding\cite{Fang_2021_ICCV}. Time-coding has limited the network scale, and we use the rate-coding direct training method to obtain large-scale SNNs in this paper.  

\textbf{Event-based Vision.} Dynamic vision sensor (DVS), which encodes the time, location, and polarity of the brightness changes for each pixel into event streams with a \textmu s level temporal resolution, poses a new paradigm shift in visual information acquisition. Compared with conventional cameras, the advantages of DVS include\cite{Gallego_2020_DVS_Survey}: requiring fewer resources since the events are only triggered when the intensity changes; a high temporal resolution which can avoid motion blur; a very high dynamic range which makes the DVS able to acquire information from challenging illumination conditions. These characteristics promote the application of DVS in various scenarios, such as high-speed object tracking\cite{High_Speed_Event_Camera_2}, autonomous driving\cite{Auto_driving_1}, low-latency interaction\cite{amir_Gesture_dataset_2017}, etc. Processing events one by one is limited to performance because a single event has little information. The general method is to group event streams with a certain temporal window as alternative representations, e.g., frame-based\cite{yao_2021_TASNN}, graph-based\cite{wang_Gait_PAMI_2021}, etc. In this paper, we adopt the frame-based representation that transforms event streams into high-rate videos, where each frame has many blank (zero) areas. SNN is suitable to process event frames since it can skip the computation of the zero areas in each input frame\cite{Nature_2}.  

\textbf{Attention in CNNs.} Depth, width, and cardinality are three important factors to get rich representation power in CNN architecture design. Apart from these factors, attention is another different aspect of architecture design that increases representation power by focusing on important information. Its significance has been studied extensively in the previous literature. Hu \emph{et al.} \cite{SE_PAMI} pioneered the attention module in CNNs, which first proposed the concept of channel attention and presented SENet for this purpose. Motivated by different channels that usually represent different objects, SENet models the relationship between channels to adaptively refine the weight of each channel, i.e., determining what to pay attention to. Since convolution operations extract informative features by blending cross-channel and spatial information together, Woo \emph{et al.} \cite{CBAM} proposed CBAM that sequentially applies both channel and spatial attention modules to determine what and where to pay attention to concurrently. There are many optimizations from various aspects for the modeling of the channel and spatial attention, such as effectiveness\cite{gao2019global}, space complexity\cite{SimAM}, computational complexity\cite{Wang_2020_ECA}, etc. Please refer to \cite{guo_2022_attention_CV_survey} for a comprehensive review of attention in CV.

\begin{figure*}[ht]
\centering
\subfigure[Conv-based LIF-SNN layer]{\includegraphics[scale=0.42]{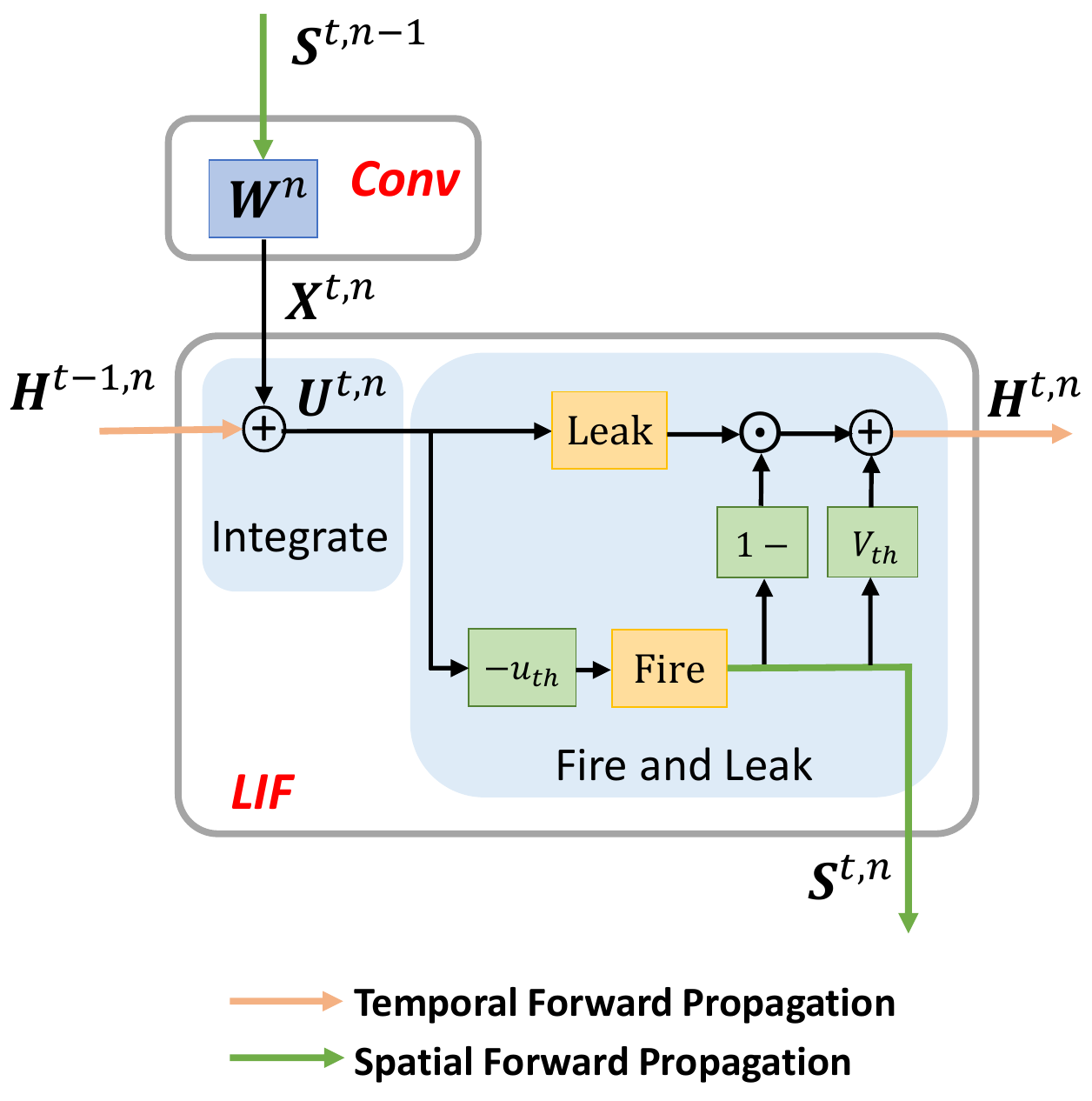}}
\quad \quad \quad 
\subfigure[Multi-dimensional attention SNN]{\includegraphics[scale=0.4]{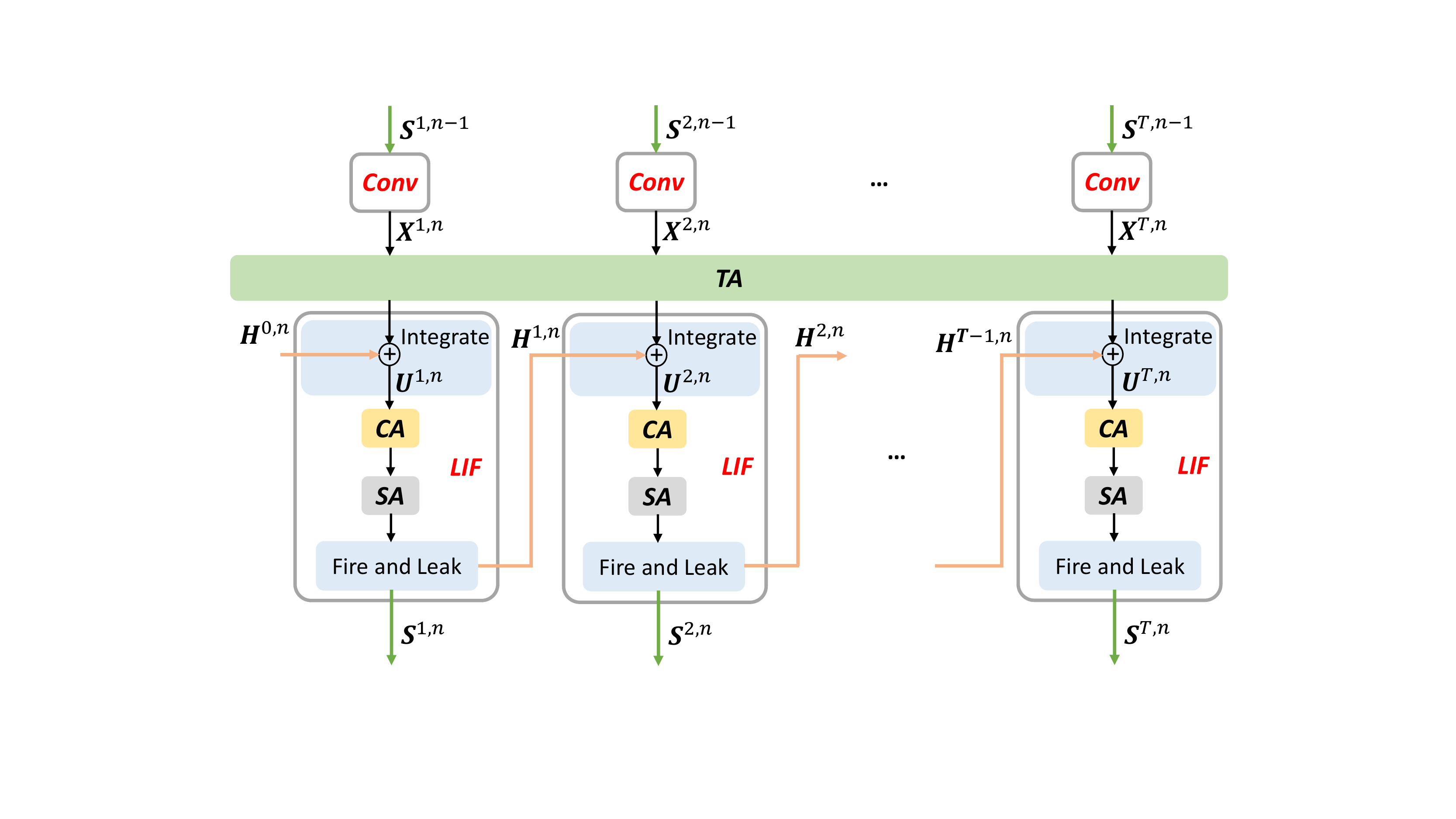}}
\caption{The Conv-based SNN layer and the overview of MA-SNN.}
\label{Fig:LIF_Layer}
\end{figure*}

\section{Multi-dimensional Attention Spiking Neural Networks}\label{section:MA-SNN}
In this section, we introduce the network input preprocessing and the Conv-based SNN in Section~ \ref{subsec:network_input} and Section~\ref{subsec:SNN}, respectively. Then we design the MA module that learns temporal (when), channel (what), and spatial-wise (where) attention separately in Section~\ref{subsec:MA_module}. Finally, we give our design of attention residual learning for SNN in Section~\ref{subsec:attention_residual_SNN}.   

\subsection{Network Input}\label{subsec:network_input}
\textbf{Event-based streams.} We adopt the frame-based representation as the preprocessing method, which transforms event streams into high-rate frame sequences where each frame has many zero areas. Event stream comprises four dimensions: two spatial coordinates $(x, y)$, the timestamp, and the polarity of each single event. The polarity $p$ indicates an increase (ON) or decrease (OFF) of brightness, where ON/OFF can be represented via +1/-1 values. Assume the initial temporal resolution of event stream is $dt^\prime$ (\textmu s level) and the spatial resolution is $h_{0}\times w_{0}$, the spike pattern tensor $\boldsymbol{S}_{t^\prime}\in\boldsymbol{R}^{2 \times h_{0}\times w_{0}}$ is equal to events set $E_{t^\prime}=\left\{e_i|e_i=\left[x_i,y_i,t^\prime,p_i\right]\right\}$ at timestamp $t^\prime$. We can set a new millisecond-level temporal resolution $dt={dt^\prime}\times \alpha$, and consecutive $\alpha$ spike patterns can be grouped as a set
\begin{equation}
    E_{t}=\left\{\boldsymbol{S}_{t^\prime}\right\},
    \label{eq:spike_pattern_set}
\end{equation}
where ${t^\prime}\in\left[\alpha \times t,\alpha\times\left(t+1\right)-1\right]$. Then, the frame for input layer at $t$ time $\boldsymbol{S}^{t, 0}\in\boldsymbol{R}^{2 \times h_{0}\times w_{0}}$ based on $dt$ can be got by
\begin{equation}
    \boldsymbol{S}^{t, 0} = q(E_{t}) = \sum\limits_{t^\prime=\alpha \times t}^{\alpha\times\left(t+1\right)-1} \boldsymbol{S}_{t^\prime},
    \label{eq:event_frame_slice}
\end{equation}
where $t\in\left\{1,2,\cdots,T\right\}$ is the \emph{time step}, and $q(\cdot)$ is element-wise addition function. In this way, the event stream can be transformed into a sequence of real-valued frames with a new frame rate, e.g., $dt = 1 ms$ corresponds to $10^3$ frames per second. Fig.~\ref{Fig:Attention_Feature_Map} shows an example of event frames. 

\textbf{Static images.} For the analog-valued signal of pixel intensity in images, adding an encoding layer to generate spike signals globally is a generic method of SNN\cite{Deng_2020_rethink_ann_SNN}. Since the SNN is a kind of spatio-temporal model, the image is copied and used as input frame at each time step when $T > 1$, i.e., $\boldsymbol{S}^{1, 0} = \boldsymbol{S}^{2, 0} = \cdots = \boldsymbol{S}^{T, 0}$. 

\subsection{Spiking Neural Networks}\label{subsec:SNN}

\textbf{Spiking Neuron.} Spiking neurons, the basic compute units of SNN, communicate through spikes coded in binary activations. The leaky integrate-and-fire (LIF) model is one of the most commonly used spiking neuron models, since it is a trade-off between the complex dynamic characteristics of biological neurons and the simplified mathematical form. It is suitable for simulating large-scale SNN and can be described by a differential function\cite{Maass_1997_LIF}
\begin{equation}
    \tau\frac{du\left(t\right)}{dt}=-u\left(t\right)+I\left(t\right),  \label{eq:continuous LIF model}
\end{equation}
where $\tau$ is a time constant, and $u\left(t\right)$ and $I\left(t\right)$ are the membrane potential of the postsynaptic neuron and the input collected from presynaptic neurons, respectively. 

\textbf{Conv-based LIF-SNN.} Solving this differential equation, a simple iterative representation of LIF-SNN layer \cite{Wu_STBP_2018,Neftci_SG_2019} for easy inference and training is governed by
\begin{equation}
    \left\{\begin{array}{l}
    \boldsymbol{U}^{t, n}=\boldsymbol{H}^{t-1, n}+\boldsymbol{X}^{t, n} \\
    \boldsymbol{S}^{t, n}=\operatorname{Hea}\left(\boldsymbol{U}^{t, n}-u_{t h}\right) \\
    \boldsymbol{H}^{t, n}=V_{reset}\boldsymbol{S}^{t, n} + \left(\beta \boldsymbol{U}^{t, n}\right) \odot \left(\mathbf{1}-\boldsymbol{S}^{t, n}\right), \\
    \end{array}\right. \label{eq:SNN_layer}
\end{equation}
where $t$ and $n$ denote the time step and layer, $\boldsymbol{U}^{t, n}$ means the membrane potential which is produced by coupling the spatial feature $\boldsymbol{X}^{t, n}$ and the temporal input $\boldsymbol{H}^{t-1, n}$, $u_{th}$ is the threshold to determine whether the output spiking tensor $\boldsymbol{S}^{t, n}$ should be given or stay as zero, $\operatorname{Hea}(\cdot)$ is a Heaviside step function that satisfies $\operatorname{Hea}\left(x\right)=1$ when $x\geq0$, otherwise $\operatorname{Hea}\left(x\right)=0$, $V_{reset}$ denotes the reset potential which is set after activating the output spiking, and $\beta = e^{-\frac{d t}{\tau}} < 1$ reflects the decay factor, and $\odot$ denotes the element-wise multiplication. 

In Eq. \ref{eq:SNN_layer}, spatial feature $\boldsymbol{X}^{t, n}$ can be extracted from the original input $\boldsymbol{S}^{t, n-1}$ through a convolution operation:  
\begin{equation}
    \boldsymbol{X}^{t, n} = \operatorname{AvgPool}\left(\operatorname{BN}\left(\operatorname{Conv}\left(\boldsymbol{W}^{n}, \boldsymbol{S}^{t, n-1}\right)\right)\right), \\
    \label{eq:Conv-based}
\end{equation}
where $\operatorname{AvgPool}(\cdot)$, $\operatorname{BN}(\cdot)$ and $\operatorname{Conv}(\cdot)$ mean the average pooling, batch normalization\cite{ioffe_batchNorm_2015} and convolution operation respectively, $\boldsymbol{W}^{n}$ is the weight matrix, $\boldsymbol{S}^{t, n-1}(n \neq 1)$ is a spike tensor that only contains 0 and 1, and $\boldsymbol{X}^{t, n} \in\mathbb{R}^{c_{n} \times h_{n} \times w_{n}}$. To simplify the notation, bias terms are omitted. BN is a default operation following the Conv, we also omit it in the rest of this paper.

\textbf{Fire and Leak Mechanism.} As shown in Fig. \ref{Fig:LIF_Layer}(a), the CNN-based LIF-SNN layer consists of two parts, the Conv and LIF. The Conv module extracts spatial features from original input $\boldsymbol{S}^{t, n-1}$ firstly. Then the LIF module integrates the spatial feature $\boldsymbol{X}^{t, n}$ and the temporal input $\boldsymbol{H}^{t-1, n}$ into membrane potential $\boldsymbol{U}^{t, n}$. Finally, the fire and leak mechanism is exploited to generate spatial spiking tensor for the next layer and the new cell states for the next time step. When the entries in $\boldsymbol{U}^{t, n}$ are greater than the threshold $u_{th}$, the spatial output of spiking sequence $\boldsymbol{S}^{t, n}$ will be activated, and the entries in $\boldsymbol{U}^{t, n}$ will be reset to $V_{reset}$, then the temporal output $\boldsymbol{H}^{t, n}$ should be decided by the $\boldsymbol{X}^{t, n}$ since $\mathbf{1}-\boldsymbol{S}^{t, n}$ must be 0. Otherwise, the decay of the $\boldsymbol{U}^{t, n}$ will be used to transmit the $\boldsymbol{H}^{t, n}$, since the $\boldsymbol{S}^{t, n}$ is 0, which means there is no activated spiking output. Note that, after the convolution operation, all tensors in the LIF module have the same dimensions, i.e., $\boldsymbol{X}^{t, n}, \boldsymbol{H}^{t-1, n},\boldsymbol{U}^{t, n},\boldsymbol{S}^{t, n},\boldsymbol{H}^{t, n}\in\boldsymbol{R}^{c_{n} \times h_{n} \times w_{n}}$. 

\begin{figure*}[ht]
\centering
\subfigure[Temporal-wise attention]{\includegraphics[scale=0.42]{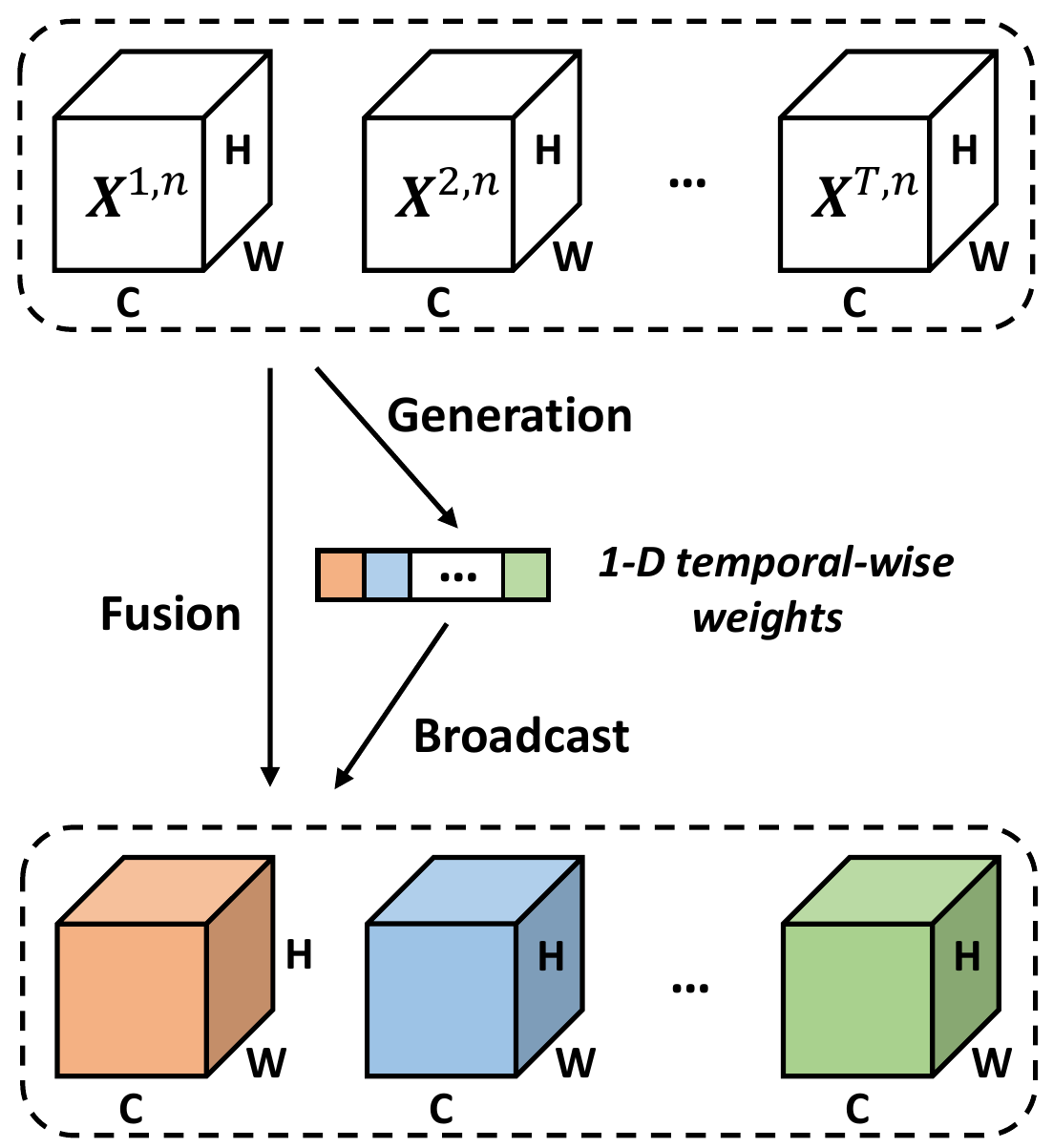}} \quad \quad \quad \quad
\subfigure[Channel-wise attention]{\includegraphics[scale=0.42]{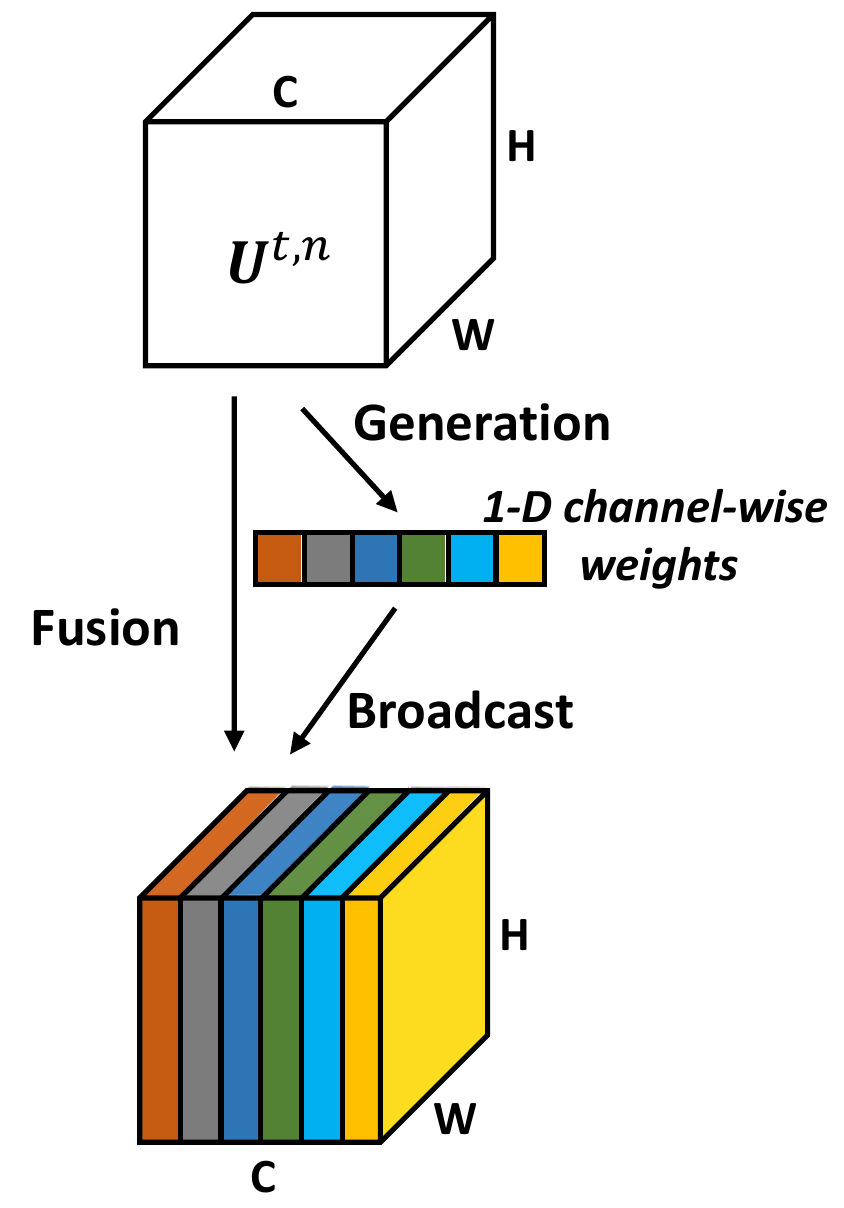}}\quad \quad \quad \quad
\subfigure[Spatial-wise attention]{\includegraphics[scale=0.42]{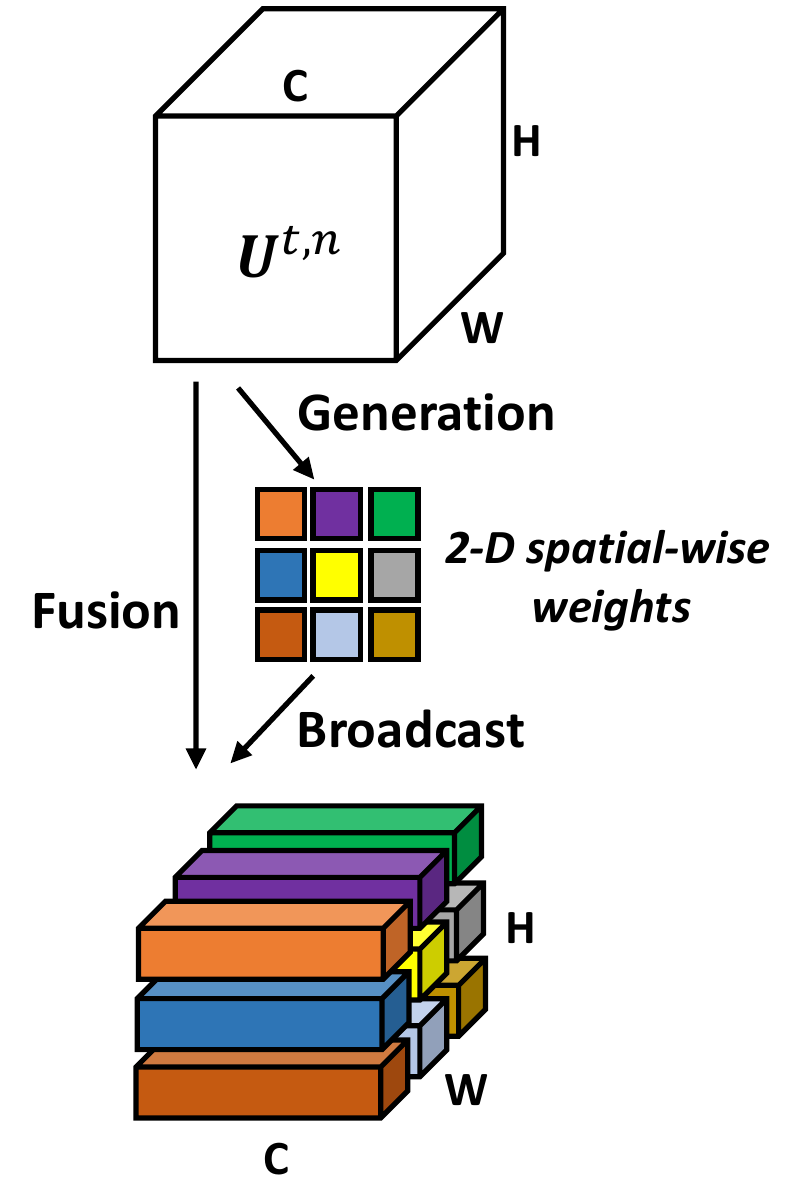}}
\caption{Illustration of different attention dimensions.}
\label{Fig:attention_mechansim}
\end{figure*}

\subsection{Multi-dimensional Attention for SNNs}\label{subsec:MA_module}

\textbf{Attention in Neuroscience.} Selective attention is a powerful brain mechanism that enables enhanced processing of relevant information while preventing interference from distracting noise\cite{moran1985selective}. Many studies in humans and animals have investigated the effect of visual attention in single-neuron, and given two solid observations\cite{briggs2013attention,maunsell2015neuronal}. The first conclusion is about how attention modulates neuronal communication with each other\cite{briggs2013attention}. By altering dendritic spines (building blocks of synapse, i.e., synaptic weights\cite{engert1999dendritic}) in a dynamic and highly selective manner, neurons enhance the detection of salient information in the noisy sensory environment. Another consensus is that attention is associated with neuronal spiking activity rate, not only in local single-neuron\cite{briggs2013attention} but also in more global visual cortex areas\cite{mitchell2009spatial}. Inputs that carry salient sensory information enhance the spiking activity rates of neurons, while potentially redundant input information does the opposite effection.

\textbf{Overview of Attention.} Although the idea of attention in CNNs is the same as neuroscience, there are fundamental differences. Obviously, continuous activations of CNNs do not conform to the spiking activation properties in biological neurons, thus losing the potential energy efficiency earnings caused by attention. To mimic the attention that modulates the spiking activity of neurons in brain, we optimize the membrane potential of spiking neurons by attention in a data-dependent manner, and the spiking response of SNN is consequently regulated. Generally, we can formulate attention processes as:
\begin{equation}
    x_{Att} = f(g(x), x), \\
    \label{eq:generic_attention_eq}
\end{equation}
where $x_{Att}$ is output with an attention mechanism, $g(x)$ is the function of generating attention weights which corresponds to the process of attending to the discriminative moments or regions. $f(g(x), x)$ means processing input $x$ based on the attention weights $g(x)$.

As shown in Fig.~\ref{Fig:LIF_Layer}(b), we design a multi-dimensional attention module that learns temporal (when), channel (what), and spatial (where) attention separately. Attention modules for each dimension can be exploited in an independently or jointly way. For MA of this paper, we adopt
\begin{equation}
    x_{Att} = g(x) \cdot x, \\
    \label{eq:generic_attention_eq_2}
\end{equation}
and input $x$ is usually an intermediate feature map.

\textbf{Temporal-wise Attention (TA).} Consider the event-based visual recognition in a real-time interaction scenario, where an event stream is divided into high-rate frames in sequence while a prediction can be retrieved after processing the data with $t_{lat} = dt \times T$ ms. Our previous research \cite{yao_2021_TASNN} observed that the accuracy of the SNN would not become worse even they masked half of the input event frames, and proposed a lightweight temporal-wise attention SNN to handle event streams by discarding irrelevant event frames. However, directly masking the input frames just keeps accuracy unchanged. By contrast, we use an advanced method that refines the interior features of the network in temporal dimension by exploiting the inter-time step relationship of feature blocks, which could obtain performance gain and energy reduction concurrently.

As depicted in Fig.~\ref{Fig:attention_mechansim}(a), to compute the 1-D TA weights, we collect intermediate feature blocks of $n$-th layer at all time steps $\boldsymbol{X}^{n} = \left[\cdots, \boldsymbol{X}^{t,n}, \cdots \right] \in \mathbb{R}^{T \times c_{n} \times h_{n} \times w_{n}}$ as input. TA function $g_{t}(\cdot)$ infers a 1-D TA weight vector $g_{t}(\boldsymbol{X}^{n}) \in \mathbb{R}^{T \times 1 \times 1 \times 1}$ as
\begin{equation}
    \boldsymbol{X}_{TA}^{n} = g_{t}(\boldsymbol{X}^{n})\odot \boldsymbol{X}^{n}, \\
    \label{eq:TA_2}
\end{equation}
where $\boldsymbol{X}_{TA}^{n} = \left[\cdots , \boldsymbol{X}_{TA}^{t,n}, \cdots \right] \in \mathbb{R}^{T \times c_{n} \times h_{n} \times w_{n}}$ is the temporal-wise refined feature blocks. During multiplication, the TA weights are broadcasted (copied) accordingly, along both the channel and spatial dimension.

Inspired by the squeeze-and-excitation operation in previous classic channel-wise attention module designs\cite{SE_PAMI,CBAM}, we design TA function $g_{t}(\cdot)$ in a similar manner. We first aggregate spatial-channel information of a feature block at each time step by using both average-pooling and max-pooling operations, generating two different temporal context descriptors, which denote average-pooled features and max-pooled features respectively. Then, we transform both average-pooled and max-pooled features to a TA weight vector by a shared MLP network, i.e.,
\begin{equation}
\begin{aligned}
g_{t}(\boldsymbol{X}^{n}) = & \sigma\left(\boldsymbol{W}_{t1}^{n}(\operatorname{ReLU}(\boldsymbol{W}_{t0}^{n}(\operatorname{AvgPool}(\boldsymbol{X}^{n})))) \right.\\
& \left. +\boldsymbol{W}_{t1}^{n}(\operatorname{ReLU}(\boldsymbol{W}_{t0}^{n}(\operatorname{MaxPool}(\boldsymbol{X}^{n}))))\right),
\end{aligned}\label{Eq:TA_1}
\end{equation}
where $\operatorname{AvgPool}(\boldsymbol{X}^{n}), \operatorname{MaxPool}(\boldsymbol{X}^{n}) \in \mathbb{R}^{T\times 1 \times 1 \times 1}$ represent the results of average-pooling and max-pooling layer respectively, $\sigma$ means the sigmoid function, $\boldsymbol{W}_{t0}^{n}\in \mathbb{R}^{\frac{T}{r_{t}} \times T}$ and $\boldsymbol{W}_{t1}^{n}\in \mathbb{R}^{T \times \frac{T}{r_{t}}}$ are the weights of linear layers in the shared MLP, $r_{t}$ denotes the temporal dimension reduction factor used to control the computational burden of MLP. 

\textbf{Channel-wise Attention (CA).} CA in CNNs only works on spatial features. Discriminatively, our CA design optimizes spatio-temporal fusion information of SNN, i.e., we adopt the CA function $g_{c}(\cdot)$ to directly refine the membrane potential of spiking neurons (more discusses of CA location design are given in Section~\ref{subsec:attention_location}). It is well known that each channel of feature maps corresponds to a certain visual pattern, and CA focuses on "what" are salient semantic attributes for the given input. Interestingly, we find that another key role of attention is the suppression of minor features, which is usually ignored in attention CNN but crucial for the efficiency of the SNN (details in Section~\ref{section:visualizing_attention}). 


We adopt the classic CBAM\cite{CBAM} to generate the CA vector for SNN at each time step as
\begin{equation}
\begin{aligned}
g_{c}(\boldsymbol{U}^{t,n}) = & \sigma\left(\boldsymbol{W}_{c1}^{n}(\operatorname{ReLU}(\boldsymbol{W}_{c0}^{n}(\operatorname{AvgPool}(\boldsymbol{U}^{t, n})))) \right.\\
& \left. +\boldsymbol{W}_{c1}^{n}(\operatorname{ReLU}(\boldsymbol{W}_{c0}^{n}(\operatorname{MaxPool}(\boldsymbol{U}^{t, n}))))\right),
\end{aligned}\label{eq:CA_1}
\end{equation}
where $g_{c}(\boldsymbol{U}^{t,n})\in\boldsymbol{R}^{c_{n} \times 1 \times 1}$ is the 1-D CA weights, $\operatorname{AvgPool}(\boldsymbol{U}^{t, n}), \operatorname{MaxPool}(\boldsymbol{U}^{t, n}) \in \mathbb{R}^{c_{n}\times 1 \times 1}$, $\boldsymbol{W}_{c0}^{n}\in \mathbb{R}^{\frac{c_{n}}{r_{c}} \times c_{n}}$, $\boldsymbol{W}_{c1}^{n}\in \mathbb{R}^{c_{n} \times \frac{c_{n}}{r_{c}}}$, and $r_{c}$ represents the channel dimension reduction factor. To reduce parameters, the MLP weights, $\boldsymbol{W}_{c0}^{n}$ and $\boldsymbol{W}_{c1}^{n}$, are shared for different time steps. The refined feature $\boldsymbol{U}_{CA}^{t, n}\in\boldsymbol{R}^{c_{n} \times h_{n} \times w_{n}}$ is computed as
\begin{equation}
    \boldsymbol{U}_{CA}^{t, n} = g_{c}(\boldsymbol{U}^{t, n})\odot \boldsymbol{U}^{t, n}. \\
    \label{eq:CA_2}
\end{equation}
Note that, the only difference between TA and CA is the inputs of attention, the former is $\boldsymbol{X}^{n}$, the latter is $\boldsymbol{U}^{t, n}$ (see Fig.~\ref{Fig:attention_mechansim}). Actually, $g_{t}(\cdot)$ and $g_{c}(\cdot)$ can be acted by various attention modules, including classic SE\cite{SE_PAMI}, energy-efficient ECA\cite{Wang_2020_ECA}, and parameter-free SimAM\cite{SimAM}, etc. We conduct ablation studies of these choices in Section~\ref{subsec:choice_of_attention_module}.

\textbf{Spatial-wise Attention (SA).} Different from the above TA and CA, the SA focuses on "where" is an informative part. We adopt the SA part of CBAM\cite{CBAM} as our SA function $g_{s}(\cdot)$, which is described as: \begin{equation}
\begin{aligned}
g_{s}(\boldsymbol{U}^{t, n}) = & \sigma\left(f^{7 \times 7}([\operatorname{AvgPool}(\boldsymbol{U}^{t, n});\operatorname{MaxPool}(\boldsymbol{U}^{t, n})]) )\right.,\\
\end{aligned}\label{eq:SA_1}
\end{equation}
where $\operatorname{AvgPool}(\boldsymbol{U}^{t, n}), \operatorname{MaxPool}(\boldsymbol{U}^{t, n}) \in \mathbb{R}^{1 \times h_{n} \times w_{n}}$, $g_{s}(\boldsymbol{U}^{t, n}) \in \mathbb{R}^{1 \times h_{n} \times w_{n}}$ is the 2-D SA attention weights, and $f^{7 \times 7}$ represents a convolution operation with the filter size of $7 \times 7$ (default hyper-parameter setting in CBAM). Then, the refined feature $\boldsymbol{U}_{SA}^{t, n}$ (see Fig.~\ref{Fig:attention_mechansim}(c)) is computed as
\begin{equation}
    \boldsymbol{U}_{SA}^{t, n} = g_{s}(\boldsymbol{U}^{t, n})\odot \boldsymbol{U}^{t, n}, \\
    \label{eq:SA_2}
\end{equation}
and the convolution operation $f^{7 \times 7}$ is shared for each time step to save parameters.

Finally, when we adopt the above three dimensions of attention concurrently (i.e., TCSA), compared with $\boldsymbol{U}^{t, n}$ of vanilla SNN in Eq.~\ref{eq:SNN_layer}, the \emph{new membrane potential} behaviors of TCSA-SNN layer follow
\begin{equation}
\begin{array}{l}
\boldsymbol{X}_{TA}^{n} = g_{t}(\boldsymbol{X}^{n})\odot \boldsymbol{X}^{n}, \\
\boldsymbol{U}_{CA}^{t, n} = g_{c}(\boldsymbol{H}^{t-1, n}+ \boldsymbol{X}_{TA}^{t,n}) \odot (\boldsymbol{H}^{t-1, n}+ \boldsymbol{X}_{TA}^{t,n}), \\
\boldsymbol{U}^{t, n} = g_{s}(\boldsymbol{U}_{CA}^{t, n})\odot \boldsymbol{U}_{CA}^{t, n}. \\
\end{array}
\end{equation}

\begin{figure}[!t]
\centering
\includegraphics[scale=0.5]{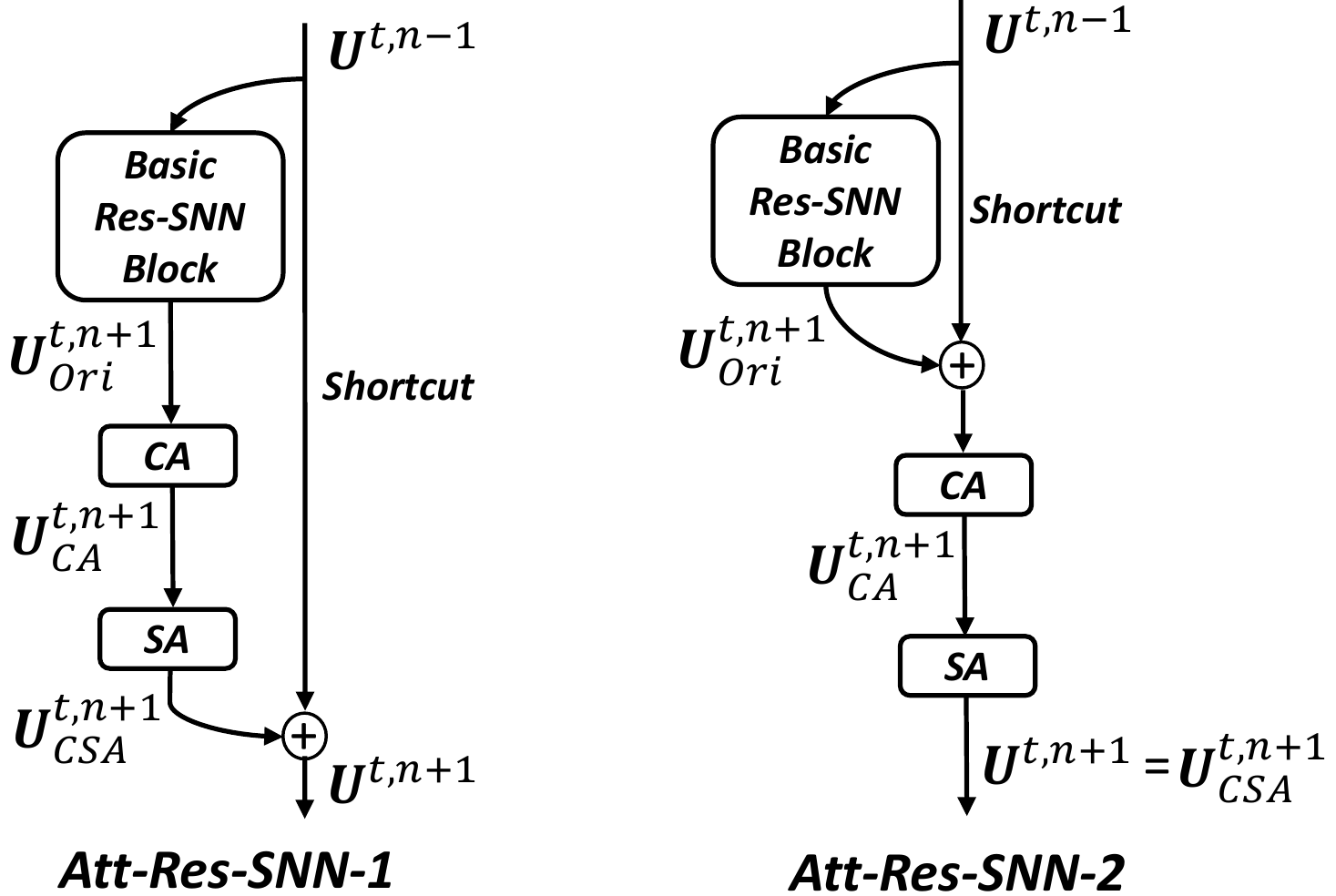}
\caption{Attention residual block contains three parts: basic Res-SNN block (MS-Res-SNN\cite{Hu_2021_MS}, details in Fig.\ref{Fig:attention_res_SNN}), shortcut, and CSA module. We exploit MA on the basic Res-SNN block outputs (i.e., membrane potential of spiking neurons) in each block. We recommend Att-Res-SNN-1 as the scheme for attention residual learning.}
\label{Fig:attention_res_block}
\end{figure}

\subsection{Attention Residual Learning of SNNs}\label{subsec:attention_residual_SNN}
SNNs have been theoretically proved its computational power can be matched with ANNs\cite{Maass_1997_LIF}. However, the limited scale in SNNs restricts the network representation power, which consequently impedes SNN's practice applications and intensifies the performance gap between SNNs and ANNs. Residual learning\cite{he_resnet_2016} becomes a milestone work in deep learning through attaching an identity skip connection throughout the entire network to achieve "very deep" neural networks. Unfortunately, directly copying the classic residual structure to SNNs, there will still be a degradation problem in that the deeper SNNs have higher train loss than the shallower SNNs. There are three mainstream residual structures of SNN, vanilla Res-SNN\cite{zheng_Going_Deeper_SNN_2021}, SEW-Res-SNN\cite{fang_deep_SNN_2021}, and MS-Res-SNN\cite{Hu_2021_MS}. The main difference among these Res-SNN works is the construction of the basic residual blocks (see Fig.\ref{Fig:attention_res_SNN}), and currently there is no uniformly standard residual block scheme in the SNN community. 

MA can be integrated into existing residual SNN architectures without constraints, and we consistently exploit attention to optimize membrane potential of spiking neurons. In this paper, we adopt the MS-Res-SNN\cite{Hu_2021_MS} as the basic residual block. As shown in Fig.~\ref{Fig:attention_res_block}, the channel-spatial attention module can be integrated with existing residual block in two ways. For Att-Res-SNN-1, we have
\begin{equation}
\begin{array}{l}
\boldsymbol{U}_{CA}^{t, n+1} = g_{c}(\boldsymbol{U}_{Ori}^{t, n+1}) \odot \boldsymbol{U}_{Ori}^{t, n+1}, \\
\boldsymbol{U}_{CSA}^{t, n+1} = g_{s}(\boldsymbol{U}_{CA}^{t, n+1}) \odot \boldsymbol{U}_{CA}^{t, n+1}, \\
\boldsymbol{U}^{t, n+1} = \boldsymbol{U}_{CSA}^{t, n+1} + \boldsymbol{U}^{t, n-1}, \\
\end{array}
\end{equation}
and Att-Res-SNN-2 can be described as
\begin{equation}
\begin{array}{l}
\boldsymbol{U}_{CA}^{t, n+1} = g_{c}(\boldsymbol{U}_{Ori}^{t, n+1} + \boldsymbol{U}^{t, n-1}) \odot (\boldsymbol{U}_{Ori}^{t, n+1} + \boldsymbol{U}^{t, n-1}), \\
\boldsymbol{U}_{CSA}^{t, n+1} = g_{s}(\boldsymbol{U}_{CA}^{t, n+1}) \odot \boldsymbol{U}_{CA}^{t, n+1}, \\
\boldsymbol{U}^{t, n+1} = \boldsymbol{U}_{CSA}^{t, n+1}, \\
\end{array}
\end{equation}
where $\boldsymbol{U}_{Ori}^{t, n+1}$ is the original output of the basic Res-SNN block, $\boldsymbol{U}_{CA}^{t, n+1}$ and $\boldsymbol{U}_{CSA}^{t, n+1}$ are the output of the CA and CSA module, respectively, $\boldsymbol{U}^{t, n+1}$ is the final output of Att-Res-SNN block. To keep event-driven nature and avoid degradation deficiency in deep SNNs concurrently, the design of attention location is also important in residual SNNs. We recommend Att-Res-SNN-1 as a scheme for attention residual learning. Analysis of basic Res-SNN block selection and ablation studies of attention residual SNNs are conducted in Section~\ref{subsec:choice_of_attention_residual}. Backpropagation gradient evolvement of Att-Res-SNN-1 and Att-Res-SNN-2 are discussed in Section~\ref{subsec:Gradient_evolvement}.

\section{Analysis of Energy Consumption}\label{section:energy_analysis}
Times of floating-point operations (FLOPs) are used to estimate computational burden in CNNs, where almost all FLOPs are MAC. For an SNN, the measure of energy cost is relatively complicated because FLOPs of the first encoder layer are MAC while all other Conv or FC layers are AC, please see Table~S1 in Supplementary Materials (SM). For MA-SNN, we use MA to regulate membrane potentials, which in turn drops the spiking activity. Thus, the energy increase comes from MAC operations due to attention. The energy decrease comes from the drop of AC operations caused by sparser spiking activity. We here focus on evaluating the energy shift between vanilla and attention SNNs. 

\textbf{Energy Cost of Vanilla SNNs.} In the encoder layer of vanilla SNNs ($n=1$), FLOPs are MAC operations that are the same as CNNs, because the work of this layer is to transform analog inputs into spikes. In addition, all other Conv and FC layers transfer spikes and execute AC operations to accumulate weights of postsynaptic neurons. Thus, the inference energy cost of a vanilla SNN $E_{Base}$ can be quantified as
\begin{equation}
\begin{aligned}
    E_{Base} &= E_{MAC} \cdot FL_{SNNConv}^{1} \\ 
    &+ E_{AC} \cdot (\sum\limits_{n=2}^{N} FL_{SNNConv}^{n} + \sum\limits_{m=1}^{M} FL_{SNNFC}^{m}), 
\end{aligned}\label{eq:base_snn_energy_cost}
\end{equation}
where $N$ and $M$ are the total number of layers of Conv and FC, $E_{MAC}$ and $E_{AC}$ represent the energy cost of MAC and AC operation, $FL_{SNNConv}^{n}$ and $FL_{SNNFC}^{m}$ are the FLOPs of $n$-th Conv and $m$-th FC layer, respectively. Refer to previous SNN works\cite{kundu_2021_HIRE_SNN,yin_2021_NMI,Hu_2021_MS}, we assume the data for various operations are 32-bit floating-point implementation in 45nm technology\cite{horowitz_energy_cost_2014}, in which $E_{MAC} = 4.6pJ$ and $E_{AC} = 0.9pJ$.  

\textbf{Additional Model and Computational Complexity.} Additional parameters and computational burden induced by three dimensions of attention modules are shown in Table~S2 (see SM). The additional parameters are solely from the two FC layers (TA, CA) or one Conv layer (SA), and therefore constitute a small fraction of the total network capacity. The additional computation burden includes two parts, $\Delta_{MAC1}$ and $\Delta_{MAC2}$, where the former comes from generating attention weights and the latter derives from refinement membrane potentials.  

\textbf{Energy Shift of Attention SNNs.} By optimizing the membrane potential, the attention mechanism drops the spiking activity of SNNs in both Conv and FC layers. The computation formulas of AC reduction are shown in Table~S2 of SM. We can estimate the shift of the energy cost versus the additional computational burden $\Delta_{MAC} = \Delta_{MAC1} + \Delta_{MAC2}$ and the decreased AC operations $\Delta_{AC}$ to demonstrate the energy efficiency of attention SNNs. The absolute energy shift between vanilla and attention SNNs can be computed as
\begin{equation}
    \Delta_{E} = E_{MAC} \cdot \Delta_{MAC} - E_{AC} \cdot \Delta_{AC}. \\
\end{equation}
We term the attention SNN energy consumption as $E_{Att}$. With the vanilla SNN as the anchor, the energy efficiency of an attention SNN is defined as 
\begin{equation}
    r_{EE} = \frac{E_{Base}}{E_{Att}} = \frac{E_{Base}}{E_{Base} + \Delta_{E}}. \\
\end{equation}
The higher the $r_{EE}$, the greater the energy efficiency. Generally, we represent the $r_{EE}$ of baseline model as 1$\times$.

\textbf{Network Average Spiking Activity Rate (NASAR).} $r_{EE}$ can estimate energy shift in a fine granularity manner, while it is difficult to use for the principle analysis of attention mechanisms. We want to design a simple and intuitive indicator to show the shift in energy cost and explore the underlying reasons. 

Spike counts of SNNs are positively correlated with energy cost and can be used to roughly measure energy cost. A spike corresponds to some AC operations, and how much is associated with the design of network architecture (i.e., $FL_{SNNConv}^{n}$ and $FL_{SNNFC}^{m}$ depend on architecture). Definitely, the lower the spike counts, the better the energy efficiency. To compute the spike counts, we define the network average spiking activity rate (NASAR) as follows: at time step $t$, a spiking network's spiking activity rate (NSAR) is the ratio of spikes produced over all the neurons to the total number of neurons in this step; then we define the NASAR which averages NSAR across all time steps $T$. Spike count is equal to \emph{NASAR} $\cdot$ \emph{neuron number} $\cdot$ \emph{T}. Once the network architecture and time step are determined, \emph{neuron number} and \emph{T} are fixed constants. So the spike counts are only related to \emph{NASAR}. We mainly exploit NASAR and NSAR to approximately represent energy costs and explore how the attention module brings the energy shift for SNNs.

\section{Experiments}\label{section:Experiment}
In this section, we investigate the \emph{effectiveness} and \emph{efficiency} of MA-SNN across a range of tasks, datasets, and architectures. We conduct extensive experiments on the datasets: DVS128 Gesture/Gait for event-based action recognition in Section~\ref{sec:exper_dvs128}; ImageNet-1K for static image classification in Section~\ref{subsec:exp_image}.

To perform better apple-to-apple comparisons, we first re-implementation all the reported performance of networks\cite{Fang_2021_ICCV,yao_2021_TASNN,Hu_2021_MS} in the PyTorch framework and set them as our baselines. When training the baseline (vanilla) models or MA-integrated models, we follow their training schemes (i.e., hyper-parameter settings), if not otherwise specified. Throughout all experiments, we verify that MA outperforms all the vanilla models in both performance and efficiency without bells and whistles, then the general applicability of MA across different architectures and as well as different tasks are demonstrated. We here directly give our recommended combination of attention dimensions. Ablation studies of attention design are given in Section~\ref{section:ablation_study}.

\begin{table*}[!ht]
	\renewcommand{\arraystretch}{1.2}
	\centering
	\caption{Comparison with previous works on Gesture and Gait. The numbers or percentages in brackets denote the performance improvement or reduction proportions of spiking activity over the re-implementation baseline \cite{yao_2021_TASNN} and \cite{Fang_2021_ICCV}. The format of ``A\std{a}'' represents the mean and variance of the accuracy of repeat ten independent experiments.}

\resizebox{\textwidth}{!}{
	\begin{tabular}{cccccccc}
	\toprule[1.2pt]
	
	\multirow{2}{*}{Datasets} & \multirow{2}{*}{Work} & \multirow{2}{*}{$dt \times T$}  & \multirow{2}{*}{Acc. (\%)} & \multirow{2}{*}{$r_{EE}$} & \multirow{2}{*}{Neuron number} & \multicolumn{2}{c}{Roughly Energy Evaluation}   \\ \cline{7-8}
	& & & & & & NASAR & Spike Counts ($\times 10^{6}$) \\
	\midrule[0.8pt]
	& Amir \emph{et al.} 2017\cite{amir_Gesture_dataset_2017} &$1 \times 120$ & 92.59 & - & - & - & -\\ 
	& Shrestha \emph{et al.} 2019\cite{Slayer} &$5 \times 300$ & 93.64 & - & - & - & -\\ 
	& Zheng \emph{et al.} 2021\cite{zheng_Going_Deeper_SNN_2021} &$30 \times 40$ & 96.87 & - & 229,376 & 0.146 & 1.337\\ 
DVS128	& Fang \emph{et al.} 2021\cite{fang_deep_SNN_2021} &$375 \times 16$ & 97.92 & - & 1,922,304 & 0.053 & 1.636\\ \cline{2-8}
	Gesture & Yao \emph{et al.} 2021\cite{yao_2021_TASNN} (Vanilla SNN) &\multirow{2}{*}{$15 \times 60$}  & 90.63\std{0.58} & 1$\times$ & 106,763 & 0.173 & 1.108\\
	 & + TCSA (\textbf{This Work})&  & \textbf{96.53\std{0.57} (+5.9)} & \textbf{3.4$\times$} & 106,763 & \textbf{0.026 (-84.9\%)} & \textbf{0.167 (-84.9\%)}\\ \cline{2-8}
	 & Fang \emph{et al.}\cite{Fang_2021_ICCV} (Vanilla SNN) &\multirow{2}{*}{$300 \times 20$} & 95.58\std{1.00} & 1$\times$ & 2,793,472 & 0.074 & 4.134  \\ 
	  & + TCSA (\textbf{This Work}) &  & \textbf{98.23 \std{0.46} (+2.7)} & \textbf{1.8$\times$} & 2,793,472 & \textbf{0.011(-85.1\%)} & \textbf{0.615(-85.1\%)} \\ \hline
    
    & Wang \emph{et al.} 2019\cite{wang_gait_cvpr_2019}   & $4400\times$1  & 89.9  & - & - & - &-\\  
	 & Wang \emph{et al.} 2021\cite{wang_Gait_PAMI_2021}  & $4400\times$1  &  94.9 & - & - &  - &-\\
    \cline{2-8} 
 
DVS128	& Yao \emph{et al.} 2021\cite{yao_2021_TASNN} (Vanilla SNN) &\multirow{2}{*}{$15 \times 60$}  & 87.59\std{0.47} & 1$\times$ & 106,772 & 0.245  & 1.570\\
	Gait &+ TCSA (\textbf{This Work})&  & \textbf{92.29\std{1.14} (+4.7)} & \textbf{3.2$\times$} & 106,772 & \textbf{0.045 (-81.6\%)} & \textbf{0.288 (-81.6\%)} \\\cline{2-8}
	  & Fang \emph{et al.}\cite{Fang_2021_ICCV} (Vanilla SNN) &\multirow{2}{*}{$220 \times 20$} & 89.87\std{1.32} & 1$\times$ & 2,793,472  & 0.051  & 2.849 \\ 
    & + TCSA (\textbf{This Work}) & & \textbf{92.78\std{0.79} (+2.9)} & \textbf{1.7$\times$} & 2,793,472 & \textbf{0.013(-74.5\%)} & \textbf{0.726(-74.5\%)} \\ 
	
	\bottomrule[1.2pt]
	\end{tabular}
	}
	\label{Table:Event_main_result}
\end{table*}

\subsection{Event-based Action Recognition}\label{sec:exper_dvs128}
\subsubsection{Experimental Setup}
\textbf{Datasets.} DVS128 Gesture\cite{amir_Gesture_dataset_2017} and DVS128 Gait \cite{wang_gait_cvpr_2019} are both event stream datasets captured by the DVS camera, which has the same \textmu s level temporal resolution and $128 \times 128$ spatial resolution but have different visual information. They capture the natural human motion, gestures and gaits. Gesture contains 11 kinds of hand gestures from 29 subjects under 3 kinds of illumination conditions, and there are 1,342 samples that each gesture has an average duration of 6 seconds. Gait contains various gaits from 21 volunteers (15 males and 6 females) under 2 kinds of viewing angles, and it records 4,200 samples where each gait has an average duration of 4.4 seconds.

\textbf{Learning.} We adopt the same experimental setup for vanilla SNNs and TCSA-SNN (here, we select TCSA as the MA), including network structure, hyper-parameters, learning methods, etc. MA is a plug-and-play module, all we have done is just add it to the vanilla models. We used \emph{identical experimental setup} to analyze the effectiveness and efficiency of the same network on different datasets. All learning methods follow \cite{yao_2021_TASNN}, and details are given in Section~S2.1. Two kinds of vanilla structures are performed for Gesture and Gait to evaluate the energy shift induced by different backbones. One has three Conv layers, following \cite{yao_2021_TASNN}. The other has five Conv layers, following \cite{Fang_2021_ICCV}. Moreover, we set various output latency $t_{lat}=dt \times T$ to investigate the effect of TCSA-SNN under the multi-scale constraints of latency. 

\begin{table*}[htbp]
\renewcommand{\arraystretch}{1.2}
\centering
\caption{Comparison with previous works on ImageNet-1K. * The input crops are enlarged to 288$\times$288 in inference.}
\begin{tabular}{cccccc}
\toprule[1.2pt]
Methods  & Work    & Model      & Time step ($T$)      & Top-1 Acc.(\%)  & Energy Efficiency ($r_{EE}$) \\ 
\midrule[0.8pt]
\multirow{10}{*}{ANN-to-SNN}  & \multirow{2}{*}{Sengupta \emph{et al.} 2019\cite{Senguputa_2013_going_deeper}} & VGG-16 & 2500 & 69.96 & -   \\ 
    &    & ResNet-34  & 2500    & 65.47   & - \\ 
    & \multirow{2}{*}{Han \emph{et al.} 2020\cite{Han_2020_CVPR}}    & VGG-16 & 4096  & 73.09  & -  \\
    &    & ResNet-34 & 4096  & 69.89   & - \\                              
    &  \multirow{2}{*}{ Wu \emph{et al.} 2021\cite{wu_2021_progressive}}    & AlexNet & 16  & 55.19  & -  \\  
    &    & VGG-16 & 16  & 65.08   & - \\ 
    & Li \emph{et al.} 2021\cite{li_2021_ann2SNN}    & VGG-16 &  \textbf{2048}  & \textbf{75.32}  & -  \\    
    & St{\"o}ckl \emph{et al.} 2021\cite{stockl2021optimized}    & ResNet-50 & 500  & 75.10 & -   \\
    &  Bu \emph{et al.} 2022\cite{bu_2022_optimized}   & VGG-16 & 512  & 74.69 & -   \\ \midrule
      
\multirow{6}{*}{Direct Training}   & \multirow{2}{*}{Zheng \emph{et al.} 2021\cite{zheng_Going_Deeper_SNN_2021}}  & ResNet-50       & 6    & 64.88   & - \\
    & & Wide-ResNet-34  & 6  & 67.05  & -  \\ 
    & \multirow{2}{*}{ Fang \emph{et al.} 2021\cite{fang_deep_SNN_2021}} & ResNet-34 & 4 & 67.04  & -  \\
     &        & ResNet-101      &      4  & 68.76   & - \\ 
    & Deng \emph{et al.} 2022\cite{deng2022temporal} & ResNet-34 & 4 & 68.00  & -  \\\cline{1-6}
\multirow{2}{*}{Direct Training}     & Hu \emph{et al.} 2021\cite{Hu_2021_MS} (SOTA backbone)  & ResNet-18 & 6 & 63.10   & 3.3$\times$ \\
     & \textbf{This Work (+CSA)}        & ResNet-18  &     1  & 63.97  & \textbf{29.7$\times$}  \\
Backpropagation     &   ANN     & ResNet-18 &     -  & 69.76    & 1$\times$ \\\midrule
     
\multirow{2}{*}{Direct Training}     & Hu \emph{et al.} 2021\cite{Hu_2021_MS} (SOTA backbone) & ResNet-34 & 6 & 69.42 & $ 3.8\times$\\
    & \textbf{This Work (+CSA)}  & ResNet-34       &   1  &  69.15  & \textbf{29.6$\times$}  \\
Backpropagation    & ANN  & ResNet-34  &  -   &  73.30  & 1$\times$ \\\midrule

\multirow{3}{*}{Direct Training}  & Hu \emph{et al.} 2021\cite{Hu_2021_MS} (SOTA backbone)  & ResNet-104*  & 5  &  76.02 & 5.3$\times$ \\
    &  \textbf{This Work (+CSA)} & ResNet-104*     &  \textbf{4}   &  \textbf{77.08}  & \textbf{7.4$\times$}  \\
    &  \textbf{This Work (+CSA)} & ResNet-104*     &  \textbf{1}   & \textbf{75.92}   & \textbf{31.8$\times$}   \\
Backpropagation    &  ANN\cite{Hu_2021_MS} & ResNet-104  & -  &  76.87 & 1$\times$  \\ 
\bottomrule[1.2pt]
\end{tabular}
\label{tab:ImageNet_results}
\end{table*}

\begin{table}[t]
	\renewcommand{\arraystretch}{1.2}
	\centering
	\caption{Comparison with baselines on ImageNet-1K (inference spatial resolution is $224\times224$). Based on the MS-Res-SNN\cite{Hu_2021_MS}, we re-implement various baseline structures and their attention counterpart with $T=1$ and report corresponding performance and energy shift. The case of $T=1$ can be regarded as pre-training of multi-time step SNNs\cite{lin_2022_pretrain}. Note that, compared with attention CNNs, the accuracy improvement of the attention on MS-Res-SNN is very significant, e.g., using identical CSA for Res-CNN-34\cite{CBAM} and Res-SNN-34 can improve the accuracy by +0.7 and +5.0 percent, respectively.}
	\begin{tabular}{cccc}
	\midrule[0.8pt]
	 Model ($T=1$) & Top-1 Acc. (\%) & $r_{EE}$ & NASAR\\
	\midrule[0.8pt]
	MS-Res-SNN-18\cite{Hu_2021_MS} & 61.70  & 1$\times$ & 0.224\\ 
	 + CA (\textbf{This work}) & 63.42 \textbf{(+1.7)} & 1.4$\times$ & 0.165 (-26.3\%)\\ 
	 + CSA (\textbf{This work}) & 63.97 \textbf{(+2.3)} & 1.5$\times$ & 0.148 (-33.9\%)\\ \hline
	 MS-Res-SNN-34\cite{Hu_2021_MS} & 64.13  & 1$\times$ & 0.203\\
	 + CA (\textbf{This work})  &67.96 \textbf{(+3.8)}& 1.2$\times$ & 0.167 (-17.7\%)\\ 
	 + CSA (\textbf{This work})  &69.15 \textbf{(+5.0)}& 1.3$\times$ & 0.153 (-24.6\%)\\\hline
	 MS-Res-SNN-104\cite{Hu_2021_MS} & 71.57  & 1$\times$ & 0.218\\
	 + CA (\textbf{This work}) & 72.03 \textbf{(+0.5)} &  1.1$\times$ & 0.195(-10.6\%)\\
	 + CSA (\textbf{This work}) & 73.82 \textbf{(+2.3)}&  1.1$\times$ & 0.201 (-7.8\%)\\
	\bottomrule[1.2pt]
	\end{tabular}
	\label{Table:result_CIFAR10_Image_result}
\end{table}

\subsubsection{Results and Analysis}
\textbf{Effectiveness.} In Table~\ref{Table:Event_main_result}, we report the accuracy of each vanilla model and its attention counterpart, and compare TCSA-SNN with previous works. We observe that in every comparison, TCSA-SNN outperforms the vanilla architectures, suggesting that the benefits of attention modules are not confined to a single event-based dataset, limited base architecture, or fixed output latency. TCSA-SNN performs better performance on the three-layer vanilla model (following \cite{yao_2021_TASNN}), which yields good gains of 5.9\% and 4.7\% on Gesture and Gait, respectively. The gains are consistent with the five-layer vanilla model (following \cite{Fang_2021_ICCV}). Thus, attention can effectively facilitate the performance of lightweight plain SNN models for event-based tasks.   

\textbf{Efficiency.} From Table~\ref{Table:Event_main_result}, we observe a significant improvement in inference efficiency between our TCSA-SNN and vanilla models. On Gesture, three-layer TCSA-SNN (following \cite{yao_2021_TASNN}) and five-layer TCSA-SNN (following \cite{Fang_2021_ICCV}) increase the energy efficiency by up to 3.4$\times$ and 1.8$\times$, respectively. We see a similar trend with regard to the effect of TCSA on Gait, finding that the TCSA-SNN outperforms our re-implemented counterpart baselines with both effectiveness and energy efficiency. Moreover, we observe that the NASAR is associated with the model size. Networks with more spiking neurons usually have sparser spiking activity, such as the NASAR of the three-layer and five-layer baseline on Gesture are 0.214 and 0.074, respectively. Incredibly, on Gesture, the NASAR of five-layer SNN drops to 0.011 with the help of TCA, which means that \emph{only 1.1\% spiking neurons are activated at each time step}.  

It should be noted that only when the benefits outweigh the costs, i.e., $E_{AC} \cdot \Delta_{AC} > E_{MAC} \cdot \Delta_{MAC}$, the energy cost can be dropped, and all of our attention designs in this paper meet this condition (see Section~\ref{subsec:exp_MA_SNN}). Moreover, the energy efficiency of attention SNNs is associated with the backbone structure ($E_{base}$). On Gait, the spike counts of the three-layer baseline are reduced by 81.6\% resulting in 3.2$\times$ better energy efficiency. By contrast, in the five-layer baseline, the reduction of spike counts and the value of $r_{EE}$ are 74.5\% and 1.7$\times$, respectively. The underlying reason is that there are more MAC operations in the first encoding layer of the five-layer baseline, which is induced by the architecture design and leads to higher $E_{base}$.

\subsection{Static Image Classification}\label{subsec:exp_image}
\subsubsection{Experimental Setup}
\textbf{Datasets.} ImageNet-1K\cite{deng2009imagenet} is the most typical static image dataset, which is widely used in the field of image classification. It provides a large-scale natural image dataset containing a total of 1,000 categories, which consists of 1.28 million training images and 50k test images. 


\textbf{Learning.} Similar to event-based recognition, each baseline architecture and its corresponding MA counterpart for ImageNet is trained with identical optimization schemes. We opted to use MS-ResNet-SNN\cite{Hu_2021_MS} architectures as strong baselines to assess the effectiveness and efficiency of attention modules and follow the training and evaluation protocols described in \cite{Hu_2021_MS}. All models are trained from re-implement scratch. All learning details are given in Section~S2.2 of SM. We exploit channel and spatial attention for ImageNet-1K, whose temporal information is weak.

\subsubsection{Results and Analysis}
\textbf{Effectiveness.} We would first set $T=1$ for all experiments which can be regarded as pre-training of multi-time step SNNs\cite{lin_2022_pretrain} and the results of top-1 accuracy on ImageNet-1K are reported in Table~\ref{Table:result_CIFAR10_Image_result}. We begin by comparing Att-Res-SNN against Res-SNN baselines with different depths. We observe that attention modules consistently improve performance across different depths. Remarkably, CSA-Res-SNN-34 exceeds Res-SNN-34 by +5.0 percent. By contrast, an identical CSA module used for Res-CNN-34 induces only +0.7 percent performance gain\cite{CBAM}.    
In Table~\ref{tab:ImageNet_results}, we make a comprehensive comparison with prior works on ImageNet-1K, including ANN-to-SNN, direct training SNN, and ANN. At the same network depth, we see that CSA-Res-SNN with single-time step can exceed or approach prior SOTA results\cite{Hu_2021_MS}, which uses multi-time steps. For example, CSA-Res-SNN-18, with an accuracy of 63.97\%, is better than the 63.10\% of its counterpart 6-time step backbone. The single-time step accuracy advantage of Res-SNN brought by the attention can also be extended to the deeper backbones (Res-SNN-34 and Res-SNN-104). Single-time step CSA-Res-SNN-34/104 has an accuracy of 69.15\%/75.92\%, which is slightly inferior to its 6/5-time step counterpart Res-SNN-34/104 backbone (69.42\%/76.02\%). We further extend CSA-Res-SNN-104 to 4-time steps and obtain an accuracy of 77.08\% on ImageNet-1K, which is the SOTA result in the SNN domain (including direct training and ANN-to-SNN), and better than the ANN with the same architecture(76.87\%). In contrast to SOTA ANN-to-SNN, our model have better accuracy (77.08\% vs. 75.32\%) and lower latency (4-time step vs. 2048-time step).

\textbf{Inference Efficiency.} From Table~\ref{Table:result_CIFAR10_Image_result} ($T=1$), we observe that employing the attention can drop the energy cost and improve performance simultaneously. For example, compared with Res-SNN-18, the NASAR of CSA-Res-SNN-18 can be reduced from 0.224 to 0.148, which means that each neuron emits only a 0.15 spike on average. Meanwhile, the task accuracy is improved from 61.70\% to 63.97\%, and the energy efficiency is up to 1.5$\times$. Different from the plain SNNs in event-based tasks, additional energy cost ($E_{MAC} \cdot \Delta_{MAC}$) caused by the attention module in Res-SNNs can be approximate ignored since the high $E_{base}$, e.g., 33.9\% decrease of spiking activity in Res-SNN-18 incurs 31.7\% reduction of energy cost ($1.5 \times$ better energy efficiency). In Table~\ref{tab:ImageNet_results}, we set the energy cost of ANN as the $E_{base}$ (1$\times$) and assess the energy cost of ANN/SNN with the same structures. As opposed to the Res-ANN-104, 5-time step Res-SNN-104 and 4-time step CSA-Res-SNN-104 achieve up to 5.3$\times$ and 7.4$\times$ better energy efficiency, respectively. Dramatically, single-times step CSA-Res-SNN-104 has up to 31.8$\times$ better compute energy efficiency compared to the Res-ANN-104. The results reported in Table~\ref{tab:ImageNet_results} show that deep SNNs have significant energy efficiency advantages over deep ANNs. Especially the single-step deep SNNs can amplify this advantage.

\textbf{Training Efficiency.} In this paper, we first train large-scale Att-Res-SNN by a single-time step and found that it can yield better or comparable accuracy than multi-time steps Res-SNN. Meanwhile, in terms of training efficiency, we hold a $6.4\times$ training acceleration and less hardware requirement (details in Section~S2.3 of SM). To the best of our knowledge, there are currently only two works\cite{fang_deep_SNN_2021,Hu_2021_MS} that directly train SNN with more than 100 layers on ImageNet-1K. Because the long training time and enormous hardware resources caused by large-scale and multi-time steps are prohibitive. Existing work attempts to alleviate this dilemma by designing complex training tricks\cite{chowdhury_2021_one_timstep_1} or complicated new spiking neuron\cite{suetake_2022_one_timestep_3}. Our experimental results show another simple potential solution to break the predicament, which includes two steps. First, the SNN community can focus on the single-step simulation algorithm design of large-scale SNNs, because it requires lower training time and less hardware. Our experimental results show that this is feasible. Attention can help single-step SNNs outperform or approach multi-step SNNs, which also demonstrates the great potential of SNNs. Then, the researchers separately focus on how to effectively extend single-step SNNs to multi-step SNNs, such as using the pre-train method\cite{lin_2022_pretrain}. We believe that reducing training time and hardware costs will help the development of SNN fields.

\begin{figure*}[!htbp]
\centering
\includegraphics[scale=0.52]{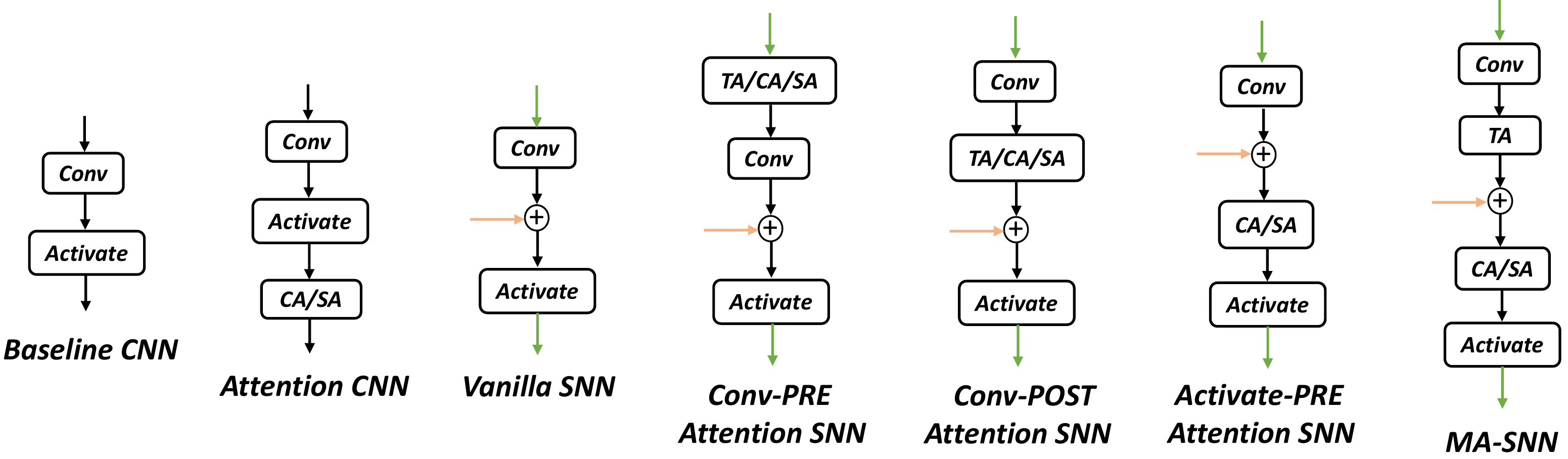}
\caption{Attention Locations in Plain SNN. From left to right: baseline CNN, processing spatial information. Attention CNN, performing attention module after activation\cite{CBAM,SE_PAMI}. Vanilla SNN, processing spatio-temporal information. Conv-PRE Attention SNN, inserting attention modules before the Conv operation. Conv-POST, acting attention modules after the Conv operation while before the spatio-temporal integration. Activate-PRE, executing attention modules on the integrated membrane potential. MA-SNN, our recommended attention locations of three dimensions in plain SNN.}
\label{Fig:MA_Location}
\end{figure*}

\begin{table}[!t]
	\renewcommand{\arraystretch}{1.2}
	\centering
	\caption{Effect of attention locations in three-layer SNN\cite{yao_2021_TASNN} at different dimensions on Gesture with $dt=15, T=60$}
	\begin{tabular}{cccc}
    	\toprule[1.2pt]
    	Model & Attention Location & Acc. (\%)  & NASAR \\
    	\midrule[0.8pt]
    	Vanilla SNN \cite{yao_2021_TASNN}& - & 90.63\std{0.58} & 0.174\\ \hline
    	\multirow{2}{*}{TA-SNN} & Conv-PRE & 91.01\std{0.69}& 0.074\\
    	 & Conv-POST & 92.60\std{0.47} & 0.073\\ \hline
    	\multirow{3}{*}{CA-SNN} &Conv-PRE & 91.84\std{0.60} & 0.086 \\
    	&Conv-POST & 91.58\std{0.29} & 0.097 \\
    	&Activate-PRE & \textbf{93.88\std{0.34}} & 0.072 \\ \hline
    	\multirow{3}{*}{SA-SNN}&Conv-PRE & 92.85\std{0.78} & 0.060 \\
    	&Conv-POST & 92.57\std{0.28} & 0.062 \\
    	&Activate-PRE & 92.50\std{0.47} & \textbf{0.058} \\
    	\bottomrule[1.2pt]
	\end{tabular}
	\label{Table:Attention_Location}
\end{table}

\section{Ablation Study}\label{section:ablation_study}
We conduct ablation experiments to gain a better understanding of the effectiveness and efficiency of adopting different configurations on the attention design for SNNs. All ablation experiments are performed by following the experimental setup in section \ref{section:Experiment}, if not otherwise specified. 

\subsection{Attention Locations in Plain SNNs}\label{subsec:attention_location}
Lightweight plain SNNs are suitable for real scenarios requiring latency, accuracy, and energy cost. We explore the effectiveness and spiking activity of SNNs with various attention locations based on the Gesture. The baseline architecture follows the three-layer SNN in \cite{yao_2021_TASNN}. The attention location design of SNNs is more sophisticated than CNNs. Firstly, the event-driven nature of SNNs will be destroyed if we copy the attention location in CNN (behind the activate operation, see Fig.~\ref{Fig:MA_Location}). Secondly, the goal of attention is to optimize the membrane potentials, which can be achieved by exploiting various attention dimensions separately or simultaneously. Thirdly, we need to examine the effect of the attention dimension on its location. Comprehensively, we consider three possible attention locations: (1) Conv-PRE, in which the attention module is inserted before the Conv operation; (2) Conv-POST, in which the attention module is moved between the Conv and the integration operation and (3) Activate-PRE, in which the attention unit acts on the integrated membrane potential. These variants are illustrated in Fig.~\ref{Fig:MA_Location}. Individual analyses and assessments are performed on these variants according to attention dimensions.

\textbf{TA Location.} Firstly, TA location cannot be Activate-PRE, i.e., TA cannot work on the membrane potential which has already aggregated spatio-temporal information, because it is impossible to recalibrate the state that has occurred. Performance of Conv-PRE and Conv-POST for TA are reported in rows 3 and 4 of Table~\ref{Table:Attention_Location}. We observe that both Conv-PRE and Conv-POST can improve performance and drop NASAR. The usage of the Conv-POST leads to better performance with a similar NASAR to Conv-PRE. So we select Conv-POST as the TA location.

\textbf{CA Location.} All location variants can be used to insert CA. We assess these locations and report results in Table~\ref{Table:Attention_Location}. We see that incorporating the Activate-PRE CA for SNNs can obtain the best performance and lowest NASAR. This suggests that using CA to directly optimize spatial-temporal fused membrane potential is more effective and efficient than only optimizing spatial features. 

\textbf{SA Location.} We finally explore the effect of using SA alone. The results are reported in the last three rows of Table~\ref{Table:Attention_Location}. We observe that individual SA can also improve performance and drop NASAR without sensitivity to the location. In the practice of attention CNNs, SA is usually performed together with CA and follows CA\cite{CBAM}. We adopt this serial setting in this paper and set the Activate-PRE location for SA. 

The comparison in Table~\ref{Table:Attention_Location} shows that effectiveness and efficiency are \emph{robust} at various dimensions to a range of attention locations. Performance always goes up no matter which location is chosen. While specific gains of accuracy and NASAR vary from location to location. These experimental results also confirm our point that optimized membrane potential can induce sparser spiking activity and better task performance simultaneously. Comprehensively considering effectiveness and efficiency, our recommended attention locations are shown at the rightmost in Fig.~\ref{Fig:MA_Location}.

\subsection{Different Attention Variants}\label{subsec:choice_of_attention_module}
Attention modules generally concern three metrics for model assessment and comparison: accuracy, additional computational burden, and parameters. Many attention modules are designed to meet various metrics limitations in different scenes. Here we investigate three typical attention modules and apply them to SNNs. One of the representative works is \emph{CBAM}\cite{CBAM}, which stacks channel and spatial attention in series to achieve higher accuracy. To reduce the computational burden of the two-layer FC operation in classic CBAM, \emph{ECANet}\cite{Wang_2020_ECA} uses a 1D convolution to model the interaction between channels. In addition, attention can be viewed as the process of feature optimization, e.g., \emph{SimAM}\cite{SimAM} proposes an attention module based on mathematics and neuroscience theories with parameter-free. We explore the influence of the above three modules by directly integrating them into three-layer baseline SNN\cite{yao_2021_TASNN}, where we adopt the same Activate-PRE location. 

The metrics of attention SNNs are compared to vanilla SNNs in Table~\ref{Table:Different_Attention}. We note that CBAM, ECA, and SimAM all perform well to improve performance and simultaneously lead to sparser spiking activity. CBAM brings the highest performance gains. SimAM has no additional parameters and obtains a maximum additional computational burden. ECA has the smallest $\Delta_{MAC}$ and negligible additional parameters. The NASARs of these attention SNNs are close. This experiment suggests that the benefits of the performance improvements and sparser spiking activity produced by optimizing membrane potentials are fairly \emph{robust} to different attention mechanisms. In practice, we can choose the attention module according to the requirement of specific scenarios. To achieve better performance, we use CBAM as the basis attention module of SNNs in this paper.  

\begin{table}[!t]
	\renewcommand{\arraystretch}{1.2}
	\centering
	\caption{Effect of Different attention modules in three-layer SNN\cite{yao_2021_TASNN} on Gesture with $dt=15, T=60$}
	\setlength{\tabcolsep}{1.3mm}{
	\begin{tabular}{ccccc}
	\toprule[1.2pt]
	Design & Acc. (\%) & Params ($\uparrow$) & NASAR  & $\Delta_{MAC}$ ($\uparrow$)\\
	\midrule[0.8pt]
	Baseline SNN\cite{yao_2021_TASNN} & 90.63\std{0.58} & - & 0.173  & - \\ \hline
	CBAM\cite{CBAM}-SNN & \textbf{93.88\std{0.34}} & +4,608  & 0.072 & +276,480 \\
	ECA\cite{Wang_2020_ECA}-SNN & 92.81\std{0.40} & +48 & 0.071  & +76,800 \\
	SimAM\cite{SimAM}-SNN & 91.98\std{0.17} & 0 & 0.083    & +25,559,043 \\
	\bottomrule[1.2pt]
	\end{tabular}
	}
	\label{Table:Different_Attention}
\end{table}

\subsection{Combinations of Attention Dimension}\label{subsec:exp_MA_SNN}
We perform an ablation study to assess the effectiveness and energy cost of the combination of various attention dimensions when executing them together. Results are reported in Table~\ref{Table:Attention_combination}. Simple superposition of various attention dimensions can further boost the representation power and energy efficiency of SNNs, e.g., three-dimensional attention TCSA-SNN achieves the highest accuracy gain of +5.9 and the high energy efficiency of 3.4$\times$ than vanilla SNN. The performance of TCSA-SNN also exceeds all two-dimensional or single-dimensional attention SNNs. Similarly, effectiveness and efficiency are \emph{robust} at various attention dimension combinations.

\begin{table}[!t]
	\renewcommand{\arraystretch}{1.2}
	\centering
	\caption{Effect of attention dimensions in three-layer Conv-based SNN\cite{yao_2021_TASNN} on Gesture with $dt=15, T=60$}
	\setlength{\tabcolsep}{1.3mm}{
	\begin{tabular}{cccc}
	\toprule[1.2pt]
	Model & Acc. (\%) & $r_{EE}$ & NASAR \\
	\midrule[0.8pt]
	Vanilla SNN\cite{yao_2021_TASNN} & 90.63\std{0.58} & 1$\times$ & 0.173  \\ \hline
	+ TA & 92.60\std{0.47} (+2.0) & 2.2$\times$ & 0.073 \\
	+ CA & 93.88\std{0.34} (+3.3)& 2.2$\times$ & 0.072 \\
	+ SA & 92.50\std{0.47} (+1.9)& 2.5$\times$ & 0.058 \\ \hline
	+ TCA & 95.73\std{0.49} (+5.1)& \textbf{3.5}$\times$ & 0.034 \\
	+ TSA & 94.31\std{0.50} (+3.7)& 3.4$\times$ & 0.030 \\
	+ CSA & 94.79\std{0.82} (+4.2)& 3.4$\times$ & 0.030  \\ \hline
	+ TCSA & \textbf{96.53\std{0.57} (+5.9)} & 3.4$\times$ & \textbf{0.026} \\
	\bottomrule[1.2pt]
	\end{tabular}
	}
	\label{Table:Attention_combination}
\end{table}

\begin{table}[!t]
	\renewcommand{\arraystretch}{1.2}
	\centering
	\caption{Effect of Different residual attention locations in Res-SNNs with $T=1$ on ImageNet-1K}
	\setlength{\tabcolsep}{1.3mm}{
	\begin{tabular}{ccc}
	\toprule[1.2pt]
	Model & Acc. (\%) & NASAR \\
	\midrule[0.8pt]
	Res-SNN-18\cite{Hu_2021_MS} & 61.70 & 0.224  \\ 
	CSA-Res-SNN-18-1 & \textbf{63.97(+2.3)} & 0.148  \\
	CSA-Res-SNN-18-2 & 63.49(+1.8) & 0.137  \\\hline
	Res-SNN-34\cite{Hu_2021_MS} & 64.13 & 0.203  \\ 
	CSA-Res-SNN-34-1 & \textbf{69.15(+5.0)} & 0.153  \\
	CSA-Res-SNN-34-2 & 68.36(+4.2) & 0.131  \\
	\bottomrule[1.2pt]
	\end{tabular}
	}
	\label{Table:Residual_Attention}
\end{table}

\begin{figure*}[!t]
\centering
\includegraphics[scale=0.51]{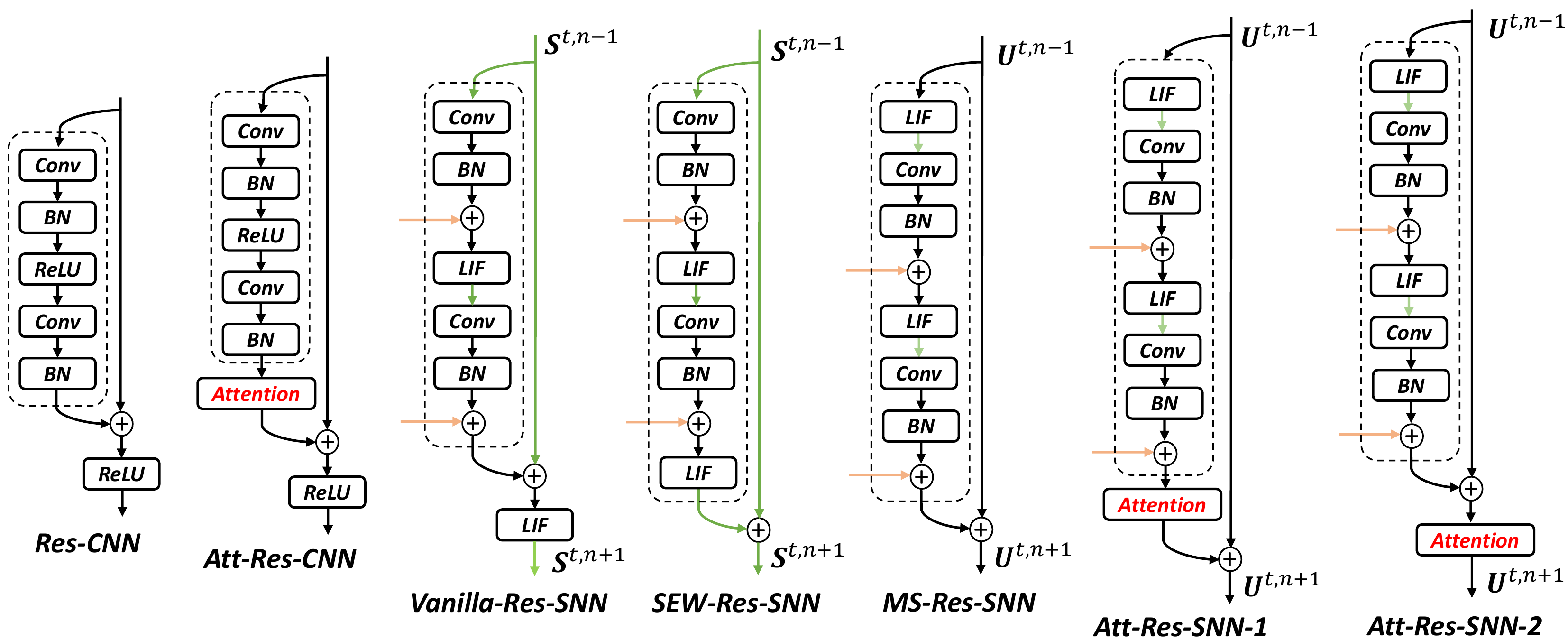}
\caption{Attention Residual Learning for SNNs. From left to right: Res-CNN\cite{he_resnet_2016}. Classic Att-Res-CNN\cite{SE_PAMI,CBAM}. Vanilla Res-SNN\cite{zheng_Going_Deeper_SNN_2021}, executing the same shortcut and residual block as Res-CNN. SEW-Res-SNN\cite{fang_deep_SNN_2021}, mainly building shortcuts between spikes from different layers. MS-Res-SNN\cite{Hu_2021_MS}, constructing a shortcut among membrane potential of spiking neurons in different layers. Att-Res-SNN-1 (our recommended method) and Att-Res-SNN-2, both select MS-Res-SNN as the backbone. The former performs attention between the residual block and shortcut connection, which is the same as classic Att-Res-CNN. The latter executes attention after the shortcut connection.}
\label{Fig:attention_res_SNN}
\end{figure*}

\subsection{Attention Residual Learning SNNs}\label{subsec:choice_of_attention_residual}

As discussed in Section~\ref{subsec:attention_residual_SNN}, Vanilla Res-SNN\cite{zheng_Going_Deeper_SNN_2021}, SEW-Res-SNN\cite{fang_deep_SNN_2021} and MS-Res-SNN\cite{Hu_2021_MS} are the only three kinds of residual learning of SNN by direct training, which dedicate to conquering the degradation problem of deep SNNs. Referring to our previous works\cite{fang_deep_SNN_2021,Hu_2021_MS}, a deeper model should have a training error no greater than its shallower counterpart if the added layers implement the identity mapping. Vanilla Res-SNN copies the experience of Res-CNN but can not obtain identity mapping since it makes a mismatching shortcut connection on membrane potential and spikes. The underlying reason is that CNNs activate analog values while SNNs activate spikes. To address this, SEW-Res-SNN and MS-Res-SNN respectively establish shortcuts between spikes or membrane potentials of different layers, where both can obtain identity mapping (see Fig.~\ref{Fig:attention_res_SNN}).   

In fact, we think the shortcut connection in MS-Res-SNN is identical to our motivation for introducing the attention, which can also be seen as a way to optimize the membrane potentials. Furthermore, given that MS-Res-SNN has higher accuracy, we choose it as the backbone model. In addition to the proposed attention residual design that performs attention between the residual block and shortcut connection, we consider another variant, Att-Res-SNN-2, in which the attention is moved after the shortcut. These variants are illustrated in Fig.\ref{Fig:attention_res_SNN} and the performance of each variant is reported in Table~\ref{Table:Residual_Attention}. We observe that both Att-Res-SNN-1 (our recommended method) and Att-Res-SNN-2 perform well (i.e., have \emph{robustness}) on effectiveness and efficiency concretely. Furthermore, both of them can overcome the degradation problem in general deep SNNs, i.e., they can achieve dynamical isometry (details in Section~\ref{subsec:Gradient_evolvement}). Moreover, Att-Res-SNN-1 is better in the effectiveness aspect, and Att-Res-SNN-2 has sparser spiking activity. Although it is beyond the scope of this work, we anticipate that further effectiveness and efficiency gains will be achievable simultaneously by tailoring backbone SNNs and attention module usage for specific tasks.

\section{Understanding and Visualizing Attention}\label{section:visualizing_attention}
We have shown that attention can concurrently boost the effectiveness and efficiency of a plain or deep SNN for various vision tasks. Here we provide an in-depth analysis of how an MA-integrated model (i.e., three-layer SNN + TCSA, Res-SNN-34 + CSA) may differ from its vanilla counterpart (i.e., three-layer SNN, Res-SNN-34). We first explore the gradient norm equality of attention deep SNNs by using the dynamical isometry framework proposed by Chen \emph{et al.}\cite{chen2020comprehensive}. Next, we provide visualization results of that the MA-SNN success to correctly classifying but the vanilla model fails. Then, we explore the effectiveness and efficiency of MA-SNN by the proposed spiking response visualization method. Finally, we investigate the change in spiking activity rate induced by attention. 


\subsection{Gradient Evolvement in Att-Res-SNNs}\label{subsec:Gradient_evolvement}

In recent years, dynamical isometry has been developed as a theoretical explanation of well-behaved neural networks. When a deep neural network is dynamical isometry, it can avoid gradient vanishing or explosion and every singular value of its input-output Jacobian matrix remains close to one. In this subsection, we analyze that both of Att-Res-SNN-1 and Att-Res-SNN-2 in Fig.~\ref{Fig:attention_res_block} can achieve gradient norm equality with the help of block dynamical isometry framework\cite{chen2020comprehensive}. In a nutshell, Att-Res-Net-1 and Att-Res-Net-2 can avoid the drawback of degradation problem and attain great stability constituting a much shallower network in effect than it appears to be for gradient norm. 

Without loss of generality, a neural network can be viewed as a serial of blocks:
\begin{equation}
    f(x_0)=f^L_{\boldsymbol{\theta}^L}\circ f^{L-1}_{\boldsymbol{\theta}^{L-1}} \circ \cdots \circ f^1_{\boldsymbol{\theta}^1}(x_0),
    \label{eq:serial}
\end{equation}
where $\boldsymbol{\theta}^i$ is the parameter matrix of the $i$-th layer. For simplicity, we denote $\frac{\partial f^{j}}{\partial f^{j-1}}$ as $\boldsymbol{J}_j$. Let $\phi(\boldsymbol{J})$ be the expectation of $tr(\boldsymbol{J})$, and $\varphi(\boldsymbol{J})$ be $\phi(\boldsymbol{J}^2)-\phi(\boldsymbol{J})^2$. 

\begin{definition}[Block Dynamical Isometry] (Definition 3.1 in \cite{chen2020comprehensive})
Consider a neural network that can be represented as Eq.~\ref{eq:serial} and the $j$-th block’s Jacobian matrix is denoted as $\boldsymbol{J}_j$. If $ \forall$ j, $\phi(\boldsymbol{J}_j {\boldsymbol{J}^T_j})$ $\approx 1$ and $\varphi(\boldsymbol{J}_j {\boldsymbol{J}^T_j}) \approx 0$, the network achieves block dynamical isometry.
\label{def:block dynamical isometry}
\end{definition}

\begin{lemma}[Shallow Network Trick] (Proposition 5.8 in \cite{chen2020comprehensive})
Assuming that for each of $L$ sequential blocks in a neural network, we have $\phi(\boldsymbol{J}_j {\boldsymbol{J}^T_j})=\omega+\tau\phi(\widetilde{\boldsymbol{J}_j}\widetilde{{\boldsymbol{J}_j}}^T)$ where $\boldsymbol{J}_j$ is its Jacobian matrix, $\omega$ and $\tau$ are two constants determined by the network structure. For a shallow $\lambda$-layer network, given $\lambda \in \mathbb{N} ^+ <L$, if $C_L^\lambda (1-\omega)^\lambda$ and $C_L^\lambda\tau^\lambda$ are small enough, the $L$-block network would be as stable as a $\lambda$-layer network when both networks have $\forall j $, $\phi(\boldsymbol{J}_j {\boldsymbol{J}^T_j}) \approx 1$.
\label{lemma:shallow network trick}
\end{lemma}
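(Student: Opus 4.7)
The plan is to reduce the stability of the full end-to-end Jacobian Gram-matrix trace to the two quantities already named in the hypothesis, by expanding a telescoping product and then controlling its tail via the given smallness conditions.

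First, I would appeal to the multiplicativity of $\phi$ that underlies the block dynamical isometry framework: under the free-independence (or asymptotic freeness) assumption on the per-block Jacobians standard in \cite{chen2020comprehensive}, one has $\phi\!\left(\prod_{j=1}^L \boldsymbol{J}_j \boldsymbol{J}_j^T\right) = \prod_{j=1}^L \phi(\boldsymbol{J}_j \boldsymbol{J}_j^T)$. Substituting the hypothesis turns the end-to-end quantity into
\[
\prod_{j=1}^L \bigl(\omega + \tau\,\phi(\widetilde{\boldsymbol{J}_j}\widetilde{\boldsymbol{J}_j}^T)\bigr) \;=\; \prod_{j=1}^L (1+\epsilon_j),
\]
where $\epsilon_j := (\omega-1) + \tau\,\phi(\widetilde{\boldsymbol{J}_j}\widetilde{\boldsymbol{J}_j}^T)$. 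The hypothesis $\phi(\boldsymbol{J}_j \boldsymbol{J}_j^T)\approx 1$ forces each $\epsilon_j$ to be small, and it also guarantees a uniform bound $|\phi(\widetilde{\boldsymbol{J}_j}\widetilde{\boldsymbol{J}_j}^T)|\le M$ that I will use in the tail estimate.

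Second, I would expand the product distributively and group by the number $k$ of factors at which $\epsilon_j$ is picked rather than $1$, yielding $\prod_{j=1}^L(1+\epsilon_j) = \sum_{k=0}^{L} e_k(\epsilon_1,\ldots,\epsilon_L)$, where $e_k$ is the $k$-th elementary symmetric polynomial. The partial sum up to $k=\lambda-1$ is precisely the type of quantity controlled by a shallow $\lambda$-layer network under $\phi\approx 1$, and is therefore acceptable; only the ``tail'' with $k\ge\lambda$ needs to be shown negligible. Using $|\epsilon_j|\le |1-\omega|+|\tau|M$ gives $|e_k|\le C_L^k\bigl(|1-\omega|+|\tau|M\bigr)^k$, and expanding this bound binomially and retaining only the two leading contributions yields
\[
\sum_{k=\lambda}^{L} |e_k| \;\lesssim\; C_L^{\lambda}(1-\omega)^{\lambda} \;+\; C_L^{\lambda}\tau^{\lambda},
\]
up to constants absorbing $M$ and the cross terms. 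Both summands on the right are small by hypothesis, so the tail is negligible and the $L$-block network inherits the same stability as a $\lambda$-layer network.

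The main obstacle I anticipate is step one: rigorously justifying the block-wise multiplicativity of $\phi$, which relies on asymptotic free independence of the per-block Jacobians. This is a delicate random-matrix assumption that typically requires proper initialization and independence across layers, and is the technical heart of the framework in \cite{chen2020comprehensive}. A secondary obstacle is making the tail bound collapse cleanly into exactly the two named quantities $C_L^\lambda(1-\omega)^\lambda$ and $C_L^\lambda \tau^\lambda$: this requires either a careful monotonicity argument on $C_L^k x^k$ for $k\ge\lambda$ so that the $k=\lambda$ term dominates the entire tail, or a term-by-term split of $(|1-\omega|+|\tau|M)^k$ combined with a triangle inequality, so that the cross terms between the $(1-\omega)$ and $\tau$ expansions do not accumulate.
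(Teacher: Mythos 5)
The first thing to note is that the paper does not actually prove this lemma: it is quoted as Proposition 5.8 of \cite{chen2020comprehensive} and used as an imported black box (the in-paper derivations begin only at Theorem 1, which applies it). So there is no in-paper argument to compare against, and your proposal has to be judged against the cited source. On that basis your skeleton is the right one and matches how the source argues: factor $\phi$ of the end-to-end Gram matrix into per-block quantities via the multiplication theorem (the paper's Lemma S1), substitute $\phi(\boldsymbol{J}_j\boldsymbol{J}_j^T)=\omega+\tau\phi(\widetilde{\boldsymbol{J}_j}\widetilde{\boldsymbol{J}_j}^T)$, expand the resulting product, and show that every term in which $\lambda$ or more blocks contribute their non-identity branch is negligible, so that the deviation from unity is of the same order as that of a $\lambda$-layer network. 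Invoking asymptotic freeness for the multiplicativity of $\phi$ is fine; it is the standing hypothesis of the whole framework and not something this lemma must re-establish.

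Two points need repair before this is a proof rather than a plan. First, the tail estimate does not close as written: from $|e_k|\le C_L^k\bigl(|1-\omega|+|\tau|M\bigr)^k$ you cannot conclude $\sum_{k\ge\lambda}|e_k|\lesssim C_L^\lambda(1-\omega)^\lambda+C_L^\lambda\tau^\lambda$ unless $C_L^k x^k$ decays in $k$ beyond $\lambda$, which forces a quantitative reading of ``small enough'' (essentially $L\bigl(|1-\omega|+|\tau|M\bigr)$ bounded by a constant multiple of $\lambda$), and the split $(a+b)^k\le 2^{k-1}(a^k+b^k)$ introduces a factor $2^k$ that must be absorbed into the same hypothesis. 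You list this as an anticipated obstacle, but it is the entire analytic content of the lemma and cannot be deferred. Second, the identification of the partial sum $\sum_{k<\lambda}e_k(\epsilon_1,\ldots,\epsilon_L)$ with ``the quantity controlled by a shallow $\lambda$-layer network'' is asserted rather than argued: $e_k(\epsilon_1,\ldots,\epsilon_L)$ has $C_L^k$ summands while the genuine $\lambda$-layer network has only $C_\lambda^k$, so you still owe a sentence saying in what sense the truncated $L$-block product and the $\lambda$-layer product are equally stable. The reading used by the source, and echoed where the paper applies the lemma, is that a single non-optimal block's deviation can only enter surviving terms of degree below $\lambda$ and hence perturbs the product no more than it would in a $\lambda$-layer network; making that the explicit conclusion of your expansion would close the gap.
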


\begin{table}[!t]
	\renewcommand{\arraystretch}{1.2}
	\centering
	\caption{$\phi(\boldsymbol{JJ}^T)$ and $\varphi(\boldsymbol{JJ}^T)$ of ReLU, Conv, Orthogonal, and sigmoid operations in neural networks. Results are collected from Lemma~S3 and Lemma~S4 in Section~S3.}
	\begin{tabular}{ccc}
    	\toprule[1.2pt]
    	Part & $\phi(\boldsymbol{JJ}^T)$ & $\varphi(\boldsymbol{JJ}^T)$ \\\hline

    	ReLU & $p$ & $p-p^2$    	\\ \hline    	
    	Conv  & $c_{in}k_{h}k_{w}\epsilon^2$  &- \\ \hline
        Orthogonal & 	${\gamma}^2$  & 0 	\\ \hline
        Sigmoid & 	$\frac{1}{16}$  & 0 	\\
    	\bottomrule[1.2pt]
	\end{tabular}
	\label{Table:common_prove_part}
\end{table}

Based on the definition of block dynamical isometry (Definition~\ref{def:block dynamical isometry}) and the shallow network trick (Lemma~\ref{lemma:shallow network trick}), we can judge whether Att-Res-SNN-1\&2 can achieve gradient norm equality or not.The serial and parallel connections in neural networks are denoted as Lemma~S1 (multiplication theorem) and Lemma~S2 (addition theorem) in Section~S3 of SM. Some commonly used network components in Att-Res-SNNs are summarized in Table~\ref{Table:common_prove_part} (Details are in Lemma~S3 and Lemma~S4 of Section~S3). 

\begin{theorem}[Gradient Norm Equality of Att-Res-SNNs]\label{Thm_grad}
Assuming two kinds of Att-Res-SNN designs in Fig.\ref{Fig:attention_res_block} consisting of $L$ sequential blocks, they can both achieve block dynamical isometry that Att-Res-SNNs could be as stable as a $\lambda$-layer network which satisfies $\phi(\boldsymbol{J}_j {\boldsymbol{J}^T_j}) \approx 1$ and $\lambda\in \mathbb{N^+} <L$.
\end{theorem}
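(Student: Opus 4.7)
The plan is to apply the block dynamical isometry framework of Chen \emph{et al.}\cite{chen2020comprehensive} as encapsulated in Definition~\ref{def:block dynamical isometry} and Lemma~\ref{lemma:shallow network trick}. First I would regard an Att-Res-SNN as a serial composition of $L$ blocks in the form of Eq.~\ref{eq:serial}, where each block is either Att-Res-SNN-1 or Att-Res-SNN-2 as drawn in Fig.~\ref{Fig:attention_res_block}. For every block I need to compute $\phi(\boldsymbol{J}_j \boldsymbol{J}_j^T)$ and cast it into the form $\omega + \tau\,\phi(\widetilde{\boldsymbol{J}}_j\widetilde{\boldsymbol{J}}_j^T)$ required by the Shallow Network Trick, and also verify $\varphi(\boldsymbol{J}_j \boldsymbol{J}_j^T)\approx 0$ to meet Definition~\ref{def:block dynamical isometry}.

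The per-block computation proceeds by decomposing the block with the multiplication theorem (Lemma~S1) along serial parts and the addition theorem (Lemma~S2) across the shortcut. For Att-Res-SNN-1 the output is the sum $\boldsymbol{U}_{CSA}^{t,n+1}+\boldsymbol{U}^{t,n-1}$; by the addition theorem the block Jacobian splits into a shortcut contribution (identity, giving $\phi\approx 1$) and an attention-modulated residual branch. That residual branch is itself a serial chain $\text{Conv}\!\to\!\text{ReLU}\!\to\!\text{CA gate}\!\to\!\text{SA gate}$, so by the multiplication theorem and Table~\ref{Table:common_prove_part} its $\phi$ factor equals a product of $c_{in}k_hk_w\epsilon^2$ (Conv), $p$ (ReLU), $\gamma^2$ (orthogonal mixing in BN/shortcut projections), and $1/16$ (each sigmoid in CA/SA). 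For Att-Res-SNN-2 the attention wraps the sum, so the chain rule places the sigmoid-induced contraction outside, and the residual-plus-shortcut inside still has its Jacobian dominated by the identity branch; a second application of the multiplication theorem again produces the shape $\omega+\tau\phi(\widetilde{\boldsymbol{J}}\widetilde{\boldsymbol{J}}^T)$, with an even smaller $\tau$ because the sigmoid attenuates the entire block output.

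Having obtained per-block $\phi$-expressions, I would combine them across the $L$ blocks using the multiplication theorem. Because $\tau$ collects multiple $1/16$ factors from the sigmoids and $\omega$ is close to $1$ by orthogonal Conv initialization, both $C_L^{\lambda}(1-\omega)^{\lambda}$ and $C_L^{\lambda}\tau^{\lambda}$ decay geometrically, so the hypothesis of Lemma~\ref{lemma:shallow network trick} is satisfied for some modest $\lambda$. Hence the $L$-block Att-Res-SNN is as well conditioned as a shallow $\lambda$-layer network with $\phi(\boldsymbol{J}_j \boldsymbol{J}_j^T)\approx 1$, which is precisely gradient norm equality. The analogous bookkeeping for $\varphi$ uses that sigmoid and orthogonal parts contribute $0$ in Table~\ref{Table:common_prove_part} and that ReLU contributes $p-p^2$, keeping $\varphi(\boldsymbol{J}_j \boldsymbol{J}_j^T)\approx 0$ block-wise, thereby confirming block dynamical isometry in the sense of Definition~\ref{def:block dynamical isometry}.

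The main obstacle I expect is handling the element-wise gating $g(x)\odot x$ inside CA and SA: its Jacobian is $\mathrm{diag}(g(x))+x\,\nabla g(x)^{\!\top}$, and the second rank-one term mixes the pooled statistics with the feature map, so the residual and shortcut branches are not exactly independent when the addition theorem is invoked. I would overcome this by arguing that under orthogonal initialization and zero-mean pre-activations the cross-correlation term vanishes in expectation and its contribution to $\phi$ is of order $\tau$, absorbed into the same small factor controlled by the sigmoid derivative bound $1/4$. A secondary subtlety is that in Att-Res-SNN-2 the outer sigmoid sits on top of the identity-plus-residual sum; here one must show that the contraction of the gate does not destroy the identity mapping, which follows because $g(\cdot)$ is initialized near $1/2$ and the product $g\odot(\boldsymbol{U}_{Ori}+\boldsymbol{U}^{t,n-1})$ preserves the residual-plus-identity structure up to a uniform rescaling that gets absorbed into $\omega$. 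Once these two points are in place, Lemma~\ref{lemma:shallow network trick} delivers the theorem for both Att-Res-SNN-1 and Att-Res-SNN-2.
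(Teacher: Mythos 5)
Your proposal follows essentially the same route as the paper's proof: decompose each Att-Res-SNN block via the multiplication/addition theorems, handle the gating Jacobian $\mathrm{diag}(g(x))+x\,\nabla g(x)^{T}$ by noting that the cross term's contribution $\phi(\boldsymbol{J}_{CA}\boldsymbol{J}_{CA}^T)\phi((IU)(IU)^T)$ vanishes for zero-mean BN outputs (Att-Res-SNN-1) and survives only as a constant for Att-Res-SNN-2, and then invoke Lemma~\ref{lemma:shallow network trick} with $(1-\omega)\to 0$ and a small $\tau$ controlled by the scale parameter. The only caveat is that your initial serial-chain accounting assigns the gates a bare $1/16$ sigmoid factor, which would be wrong if left uncorrected, but your ``main obstacle'' paragraph fixes this in exactly the way the paper does, so the argument lands in the same place.
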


\begin{proof}
According to Lemma~S1, the whole network's Jacobian matrix can be decomposed into the multiplication of its blocks’ Jacobian matrices. After integrating the attention module into the basic Res-SNN block, we expect that each Att-Res-SNN block should satisfy $\phi(\boldsymbol{J}_j {\boldsymbol{J}^T_j})$ $\approx 1$, which provides stable gradient evolvement. We first analyze the CA and SA blocks. Then we evaluate the CSA block consisting of a serial connection of CA and SA. Finally, we discuss Att-Res-SNN-1\&2 that integrate CSA blocks. 

The Jacobian matrix of channel (function $g_{c}(\cdot)$) and spatial (function $g_{s}(\cdot)$) attention block are denoted as $\boldsymbol{J}_{CA}$ and $\boldsymbol{J}_{SA}$, respectively. Assuming $\boldsymbol{W}_{c1}^{n}$ and $\boldsymbol{W}_{c0}^{n}$ in Eq.~\ref{eq:CA_1} are satisfy the Haar orthogonal initialization method. According to Eq.~\ref{eq:CA_1}, Eq.~\ref{eq:SA_1}, and Lemma~S1, $\phi(\boldsymbol{J}_{CA} {\boldsymbol{J}_{CA}^T})$ and $\phi(\boldsymbol{J}_{SA} {\boldsymbol{J}_{SA}^T})$ can be decomposed as follow:

\begin{equation}
\begin{aligned}
\phi(\boldsymbol{J}_{CA} {\boldsymbol{J}_{CA}^T})= 2 \phi(\boldsymbol{J}_{Sig} {\boldsymbol{J}_{Sig}^T})\phi(\boldsymbol{J}_{{W}_{c1}} {\boldsymbol{J}_{{W}_{c1}}^T}) \\ \phi(\boldsymbol{J}_{ReLU} {\boldsymbol{J}_{ReLU}^T})\phi(\boldsymbol{J}_{{W}_{c0}} {\boldsymbol{J}_{{W}_{c0}}^T}).
\end{aligned}
\end{equation}
and
\begin{equation}
\phi(\boldsymbol{J}_{SA} {\boldsymbol{J}_{SA}^T})=\phi(\boldsymbol{J}_{Sig} {\boldsymbol{J}_{Sig}^T})\phi(\boldsymbol{J}_{Conv} {\boldsymbol{J}_{Conv}^T}).
\end{equation}
According to Table~\ref{Table:common_prove_part}, we have
\begin{equation}
\phi(\boldsymbol{J}_{CA} {\boldsymbol{J}_{CA}^T})=\frac{\gamma_{w1}^2\gamma_{w0}^2p}{8},
\label{Eq:SA_J}
\end{equation}
and
\begin{equation}
\phi(\boldsymbol{J}_{SA} {\boldsymbol{J}_{SA}^T})=\frac{c_{in}k_{h}k_{w}\epsilon^2}{16}.
\label{Eq:CA_J}
\end{equation}

Then we evaluate the CSA block. For simplicity, we suppose the input of $g_{c}(\cdot)$ is $U$, the output of channel refinement is $U_{c}$. The channel refinement of membrane potentials can be described as
\begin{equation}
U_c = g_c(U) \odot U = g_c(U) (I U),
\end{equation}
where $I$ is the identity tensor, and we have
\begin{equation}
\frac{\partial U_c}{\partial U} = I g_c(U) + \frac{\partial g_c}{\partial U} (I U).
\end{equation}
Similarly, for spatial refinement of membrane potentials, we have
\begin{equation}
\begin{aligned}
U_s = g_s(U) \odot U = g_s(U) (I U),
\end{aligned}
\end{equation}
and
\begin{equation}
\frac{\partial U_s}{\partial U} = I g_s(U) + \frac{\partial g_s}{\partial U} (I U),
\end{equation}
where $U_s$ is the output of spatial refinement. Linking channel and spatial attention in tandem
\begin{equation}
U_{cs} = g_s(g_c(U) \odot U) \odot (g_c(U) \odot U),
\end{equation}
where $U_{cs}$ is the output of channel-spatial refinement. The Jacobian matrix of channel-spatial attention block $\boldsymbol{J}_{CSA}$ is
\begin{equation}
\begin{aligned}
\boldsymbol{J}_{CSA} &= \frac{\partial U_{cs}}{\partial U} = \frac{\partial U_{cs}}{\partial U_c} \frac{\partial U_c}{\partial U}\\
&=(I g_s(U_c) + \frac{\partial g_s}{\partial U_c} (I U_c))(I g_c(U) + \frac{\partial g_c}{\partial U} (I U)).
\end{aligned}
\end{equation}
Based on Lemma~S1 and Table~\ref{Table:common_prove_part}, we have
\begin{equation}
\begin{split}
&\phi(\boldsymbol{J}_{CSA} {\boldsymbol{J}_{CSA}^T})=\\
&\phi((I g_s(U_c) + \frac{\partial g_s}{\partial U_c} (I U_c)) (I g_s(U_c) + \frac{\partial g_s}{\partial U_c} (I U_c))^T)\\
&\phi((I g_c(U) + \frac{\partial g_c}{\partial U} (I U)) (I g_c(U) + \frac{\partial g_c}{\partial U} (I U))^T)\\
&=(\phi(\boldsymbol{J}_{SA} {\boldsymbol{J}_{SA}^T})
\phi((I U_c) (I U_c)^T)+
\phi(I g_s(U_c) (I g_s(U_c))^T))\\
&(\phi(\boldsymbol{J}_{CA} {\boldsymbol{J}_{CA}^T})
\phi((I U) (I U)^T)+
\phi(I g_c(U) (I g_c(U))^T)).
\end{split}
\label{Eq:CSA_J}
\end{equation}

According to Table~\ref{Table:common_prove_part}, the means of outputs from sigmoid function is $\frac{1}{2}$. Thus, $\phi(I g_c(U) (I g_c(U))^T)$, $\phi(I g_s(U_c) (I g_s(U_c))^T)$, and $\phi((I U_c) (I U_c)^T)$ are equal to $=\frac{1}{4}$. The means of $U$ are 0 and $C$ (a constant) in Att-Res-SNN-1 (BN output) and Att-Res-SNN-2 (BN output plus attention output), respectively. Bring Eq.~\ref{Eq:CA_J} and Eq.~\ref{Eq:SA_J} into Eq.~\ref{Eq:CSA_J}, then for Att-Res-SNN-1:
\begin{equation}
\begin{aligned}
\phi(\boldsymbol{J}_{CSA} {\boldsymbol{J}_{CSA}^T}) &=\\
& (\frac{1}{4}+\frac{c_{in}k_{h}k_{w}\epsilon^2}{16}\frac{1}{4})(\frac{1}{4}+ \frac{\gamma_{w1}^2\gamma_{w0}^2p}{8}\times 0),
\end{aligned}
\end{equation}
and for Att-Res-SNN-2:
\begin{equation}
\begin{aligned}
\phi(\boldsymbol{J}_{CSA} {\boldsymbol{J}_{CSA}^T}) &=\\
 & (\frac{1}{4}+\frac{c_{in}k_{h}k_{w}\epsilon^2}{16}\frac{1}{4})(\frac{1}{4}+ \frac{\gamma_{w1}^2\gamma_{w0}^2p}{8}\times C).
\end{aligned}
\end{equation}
Thus, for attention blocks (CA, SA, CSA), $\phi(\boldsymbol{J J}^T)=1$ can be achieved when $\epsilon$, $\gamma_{w1}^2$, $\gamma_{w0}^2$, and $p$ are set appropriately. 

Finally, we consider the Att-Res-SNN-1 and Att-Res-SNN-2 that Res-SNN block and CSA block form the new attention residual block in Fig~\ref{Fig:attention_res_block}. For simplicity, we denote the Jacobian matrix of attention residual block and shortcut as $\boldsymbol{J}_j$ and $\widetilde{\boldsymbol{J}_j}$ respectively. According to Lemma~\ref{lemma:shallow network trick}, for Att-Res-SNN-1, we have $\phi(\boldsymbol{J}_j {\boldsymbol{J}^T_j})=1+\zeta^2\phi(\boldsymbol{J}_{CSA} {\boldsymbol{J}_{CSA}^T})\phi(\widetilde{\boldsymbol{J}_j}\widetilde{{\boldsymbol{J}_j}}^T)$ where $\zeta$ is from the linear scale transformation $\zeta x + \beta$ within the normalization
at the bottom of the basic Res-SNN block. Att-Res-SNN-1 can be viewed as an extreme example of Lemma~\ref{lemma:shallow network trick} with $(1 - \omega) \rightarrow 0$. Therefore $\forall \lambda$, $C_L^\lambda (1 - \omega)^\lambda$ is close to zero, and $C_L^\lambda\zeta^\lambda$ can be small
enough for a given $\lambda$ if $\zeta$ is initialized as a relative small value. In this way, the non-optimal block's error will be influential only within $\lambda$ layers, and the Att-Res-SNN-1 will be as stable as a much shallower $\lambda$-layer network. For Att-Res-SNN-2, we can obtain $\phi(\boldsymbol{J}_j {\boldsymbol{J}^T_j})=\phi(\boldsymbol{J}_{CSA} {\boldsymbol{J}_{CSA}^T})(1+\zeta^2\phi(\widetilde{\boldsymbol{J}_j}\widetilde{{\boldsymbol{J}_j}}^T))$. So if the basic residual block can achieve dynamical isometry, Att-Res-SNN-2 can also do it.
\end{proof}

\begin{figure}[!t]
\centering
\subfigure[Case study on DVS128 Gesture]{\includegraphics[scale=0.52]{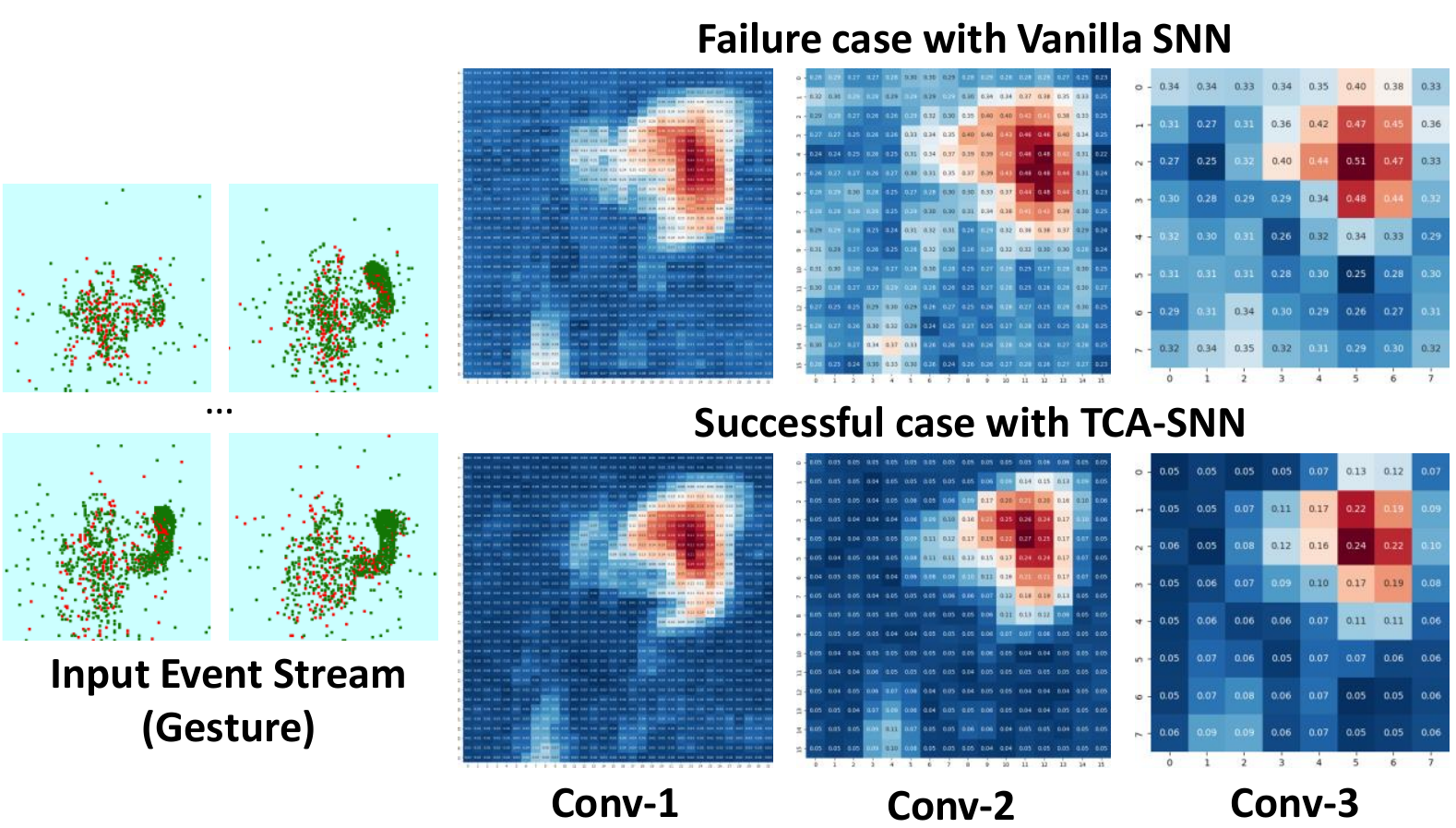}}
\subfigure[Case study on DVS128 Gait]{\includegraphics[scale=0.52]{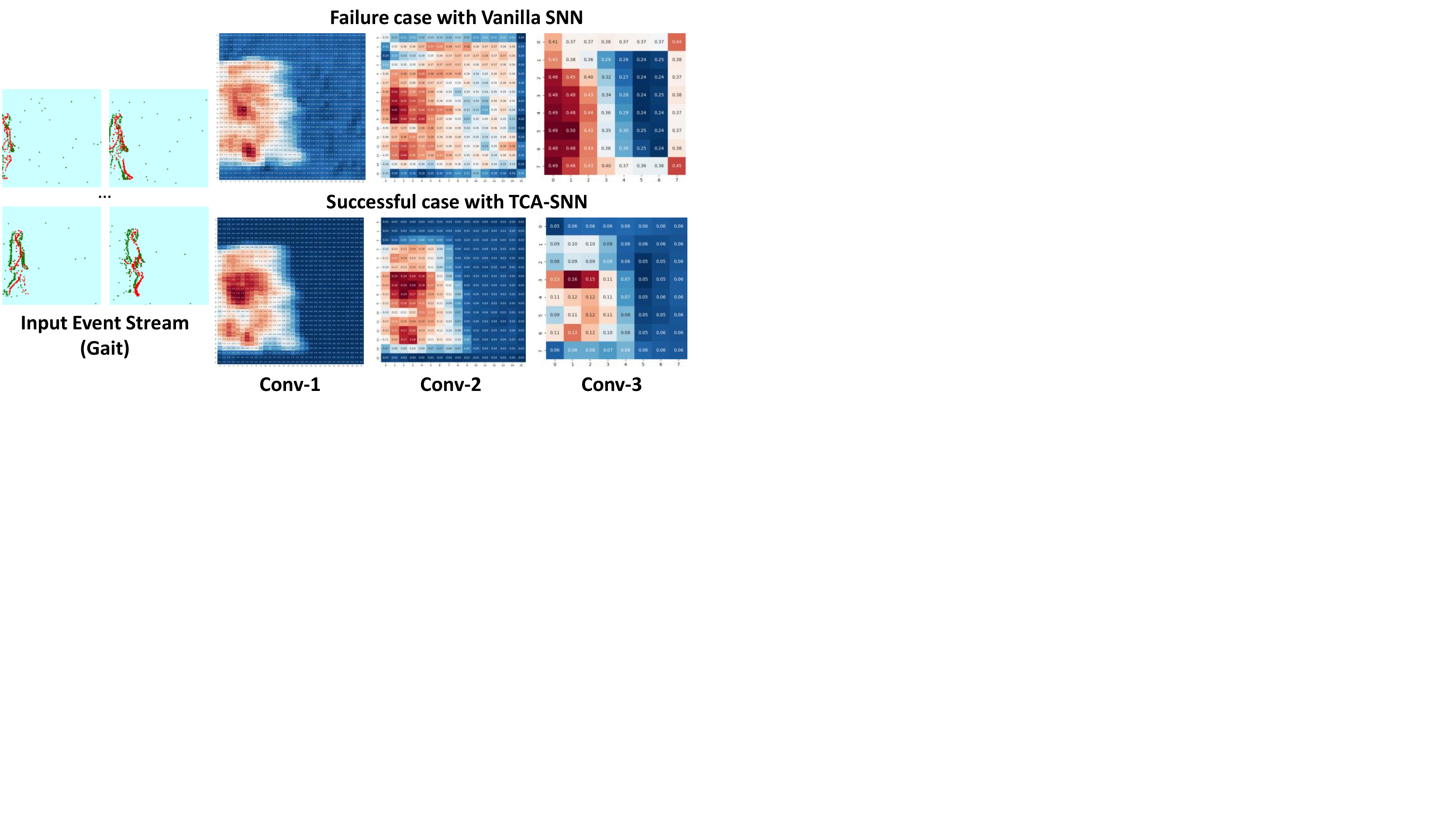}}
\caption{Case study on event-based tasks. We can observe that attention drives SNNs to focus on the target while the vanilla model shows more decentralized spiking activations. }
\label{Fig:Case_error}
\end{figure}

\subsection{Case Study}
Understanding attention mechanisms by visualizing intermediate features\cite{park_2020_attention_bam} or attention heat maps\cite{zhou_2016_CAM} of a single sample are the two most common methods in CNNs. Activation values can directly generate the former, and the latter is produced by class activation mapping (CAM)\cite{zhou_2016_CAM}. Compared with these traditional methods, SNN's visualization is more natural since it has only two active statuses. In this paper, for a single event-based sample, we averaged all the 4D ($[T, C, H, W]$) spiking maps of SNN into a 2D map ($[H, W]$) over the temporal and channel dimension at each layer. Then we plot the 2D feature, which represents the average spiking response of every layer for this sample. 

To visualize the effectiveness and efficiency of attention SNNs, we select two examples with regard to the case of the vanilla SNN failing in recognition but the attention SNN succeeds, where one from Gesture and the other from Gait. As shown in Fig.~\ref{Fig:Case_error}, each feature indicates the average spiking response of a layer of SNN. We make the following three observations about the effect of attention on SNN. First, the spiking activity is more concentrated in TCA-SNN, i.e., the red area of TCA-SNN is smaller and more focused. This suggests that attention is good for focusing on the important spatial region of intermediate channels. The second observation is that in the background region, the spiking response is suppressed. We see that attention darkens the color of the light blue area (background). The bluer the pixel, the closer the spiking activity rate is to 0. Finally, in all network layers, besides the obvious focus and suppression phenomena between vanilla and attention SNNs, we see a decrease in the overall spiking response induced by attention. For example, in Conv-3 of vanilla SNN and TCA-SNN on Gesture, the highest values are 0.51 and 0.24, respectively. These observations are consistent with the NASAR values of vanilla and attention SNNs in Table~\ref{Table:Attention_combination}. Thus, by optimizing the membrane potential of spiking neurons, attention induces sparser spiking activity of SNNs.

\begin{figure}[!t]
\centering
\includegraphics[scale=0.28]{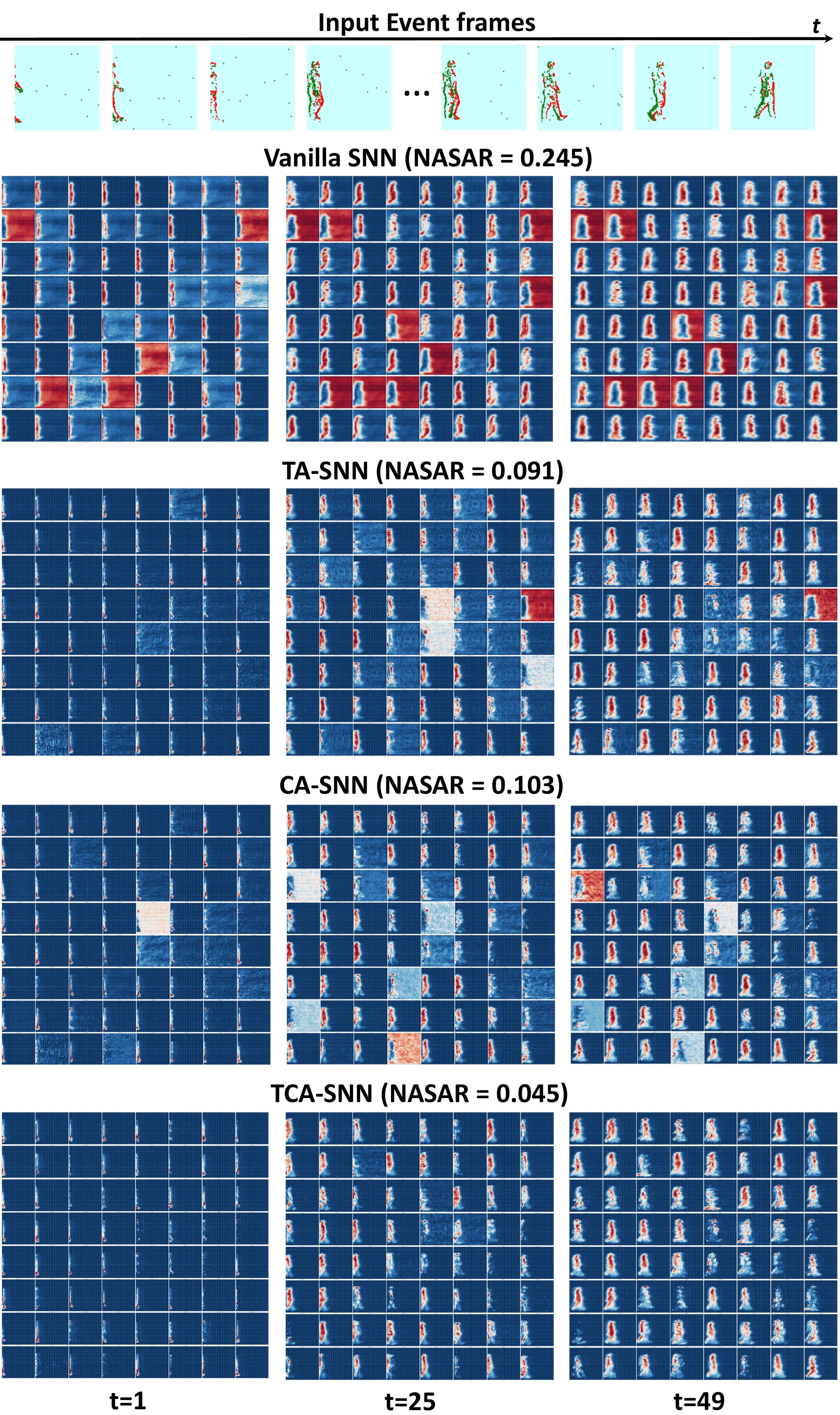}
\caption{Visualization of overall spiking response on Gait. From top to bottom: input event frames on Gait. Visualization of spiking response features in vanilla SNN, TA-SNN, CA-SNN, and TCA-SNN respectively, where $t=1,25,49$ and $n = 1$. Each pixel on the feature represents the spiking activity rate of one neuron over the whole validation set. For a single channel, the redder the pixel, the higher spiking activity rate; the bluer the pixel, the closer the spiking activity rate is to 0. The dimension of $\overline{\boldsymbol{S}}^{t, 1}$ (first layer) is $(32, 32, 64)$ at each time step, and we depict all 64 channels. We can clearly observe that attention drives the network to focus on the target and suppress the redundant background channels. Masking the background channel will significantly reduce the SNN energy cost, since the background contains a high spiking activity rate.} 
\label{Fig:Attention_Feature_Map}
\end{figure}

\subsection{Analysis of Overall Spiking Response}

Classical CNN visualization methods before-mentioned generally can only be exploited to analyze a single simple such as one image, providing an intuitive feel of the attention mechanism. Previous work \cite{kim_2021_SNN_CAM} extended CNN visualization methods to the SNN community, but did not get rid of the single-sample analysis limitation. On the other hand, existing attention works usually neglect the suppression part of the attention, which could dominate the network efficiency in SNNs. Although some works\cite{CBAM,park_2020_attention_bam,wang_2017_attention_background} have observed that attention would diminish background responses, they didn't realize the significance of this phenomenon. The underlying reason is that even if the background is suppressed to zero, it still needs to be computed and consumes energy for ANN on the GPU. By contrast, suppressing background noise is critical to the efficiency of SNNs because we observe that the background part has a higher spiking activity rate. 

In this paper, we propose the average spiking response visualization (ASRV) method to demonstrate the spiking response distribution of SNNs on various datasets. Specifically, we first compute the spiking tensor $\boldsymbol{S}^{t, n}$ (spiking feature maps, only 0 or 1) for each sample of the validation set. Then average all spiking tensors to get average spiking response feature $\overline{\boldsymbol{S}}^{t, n}\in\boldsymbol{R}^{c_{n} \times h_{n} \times w_{n}}$, where each element of $\overline{\boldsymbol{S}}^{t, n}$ represents the spiking activity rate of a neuron on the validation set. Finally, we plot $\overline{\boldsymbol{S}}^{t, n}$ to visualize the spiking response.

\textbf{Analysis of overall spiking response on DVS128 Gait.} In Fig.~\ref{Fig:Attention_Feature_Map}, we visualize average spiking response $\overline{\boldsymbol{S}}^{t, n}$ (take $n=1, t=1,25,49$ as examples) for various models based on Gait with $dt=15, T=60$, including vanilla SNN, TA-SNN, CA-SNN and TCA-SNN. Each pixel on the feature represents the spiking activity rate of one neuron over the whole validation set. We can clearly observe that attention modules drive the network to focus on the target and suppress the redundant background channels. For example, we see vanilla SNN has nine channels with a large area of red (spiking activity rates of these neurons are very close to 1) when $T=49$, which means that these channels focus on background information. After integrating with attention modules, the background channels are significantly suppressed. Thereby the NASAR of SNNs is greatly reduced. Specifically, the NASAR of TA-SNN and CA-SNN has been respectively dropped from 0.245 to 0.091 and 0.103. Combining TA and CA (i.e., TCA) can further drop NASAR to 0.045 without invalid background information. In short, we discover that attention can significantly suppress unimportant background channels that contain a very high spiking activity rate, which in turn drops the energy cost of SNNs. Crucially, the lower the NASAR, the higher the energy efficiency because the neuromorphic chip can skip the computation of zeros. 

\begin{figure}[!t]
\centering
\subfigure[Earlier stage of overall spiking response]{\includegraphics[scale=0.22]{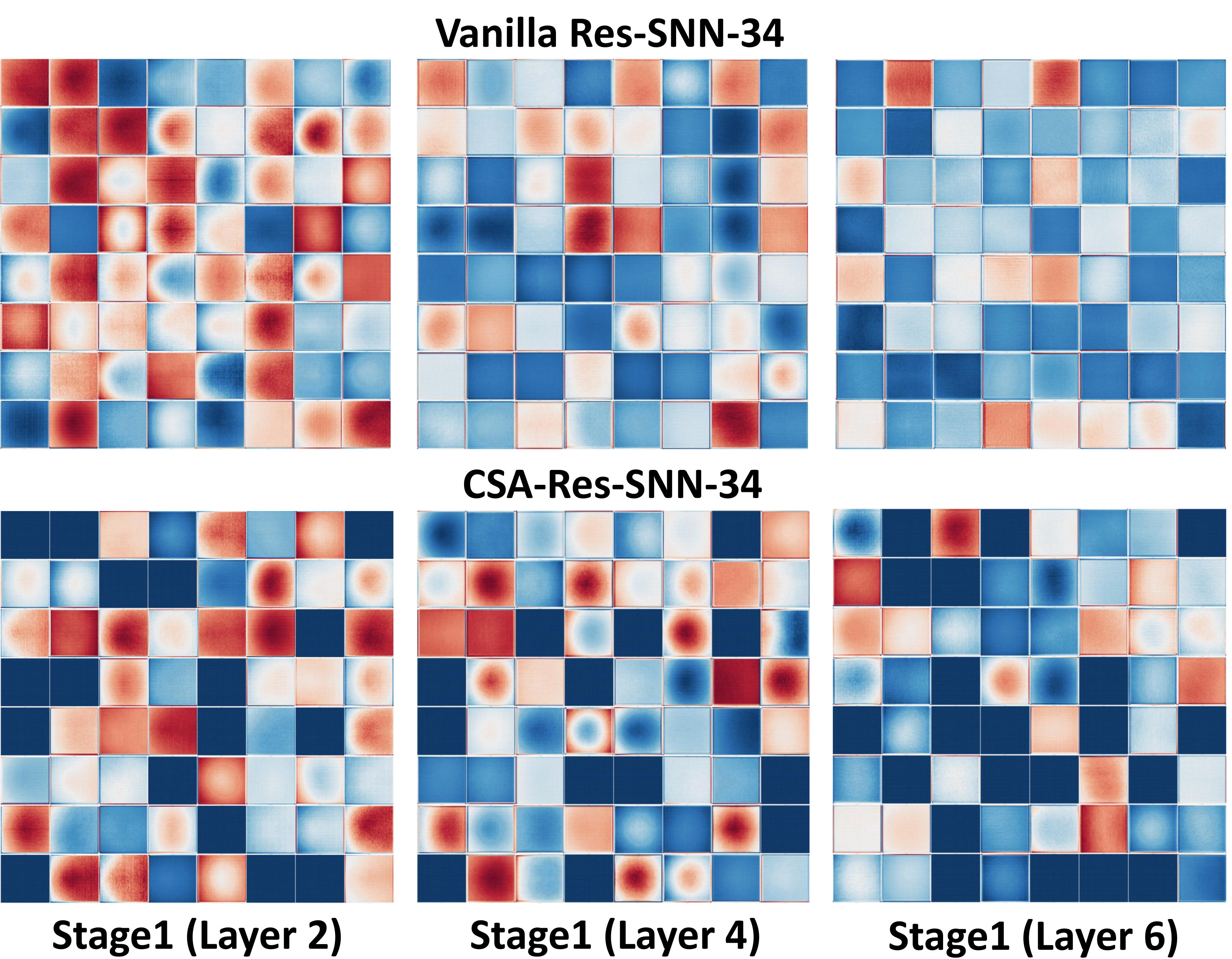}}
\subfigure[Last stage of overall spiking response]{\includegraphics[scale=0.2]{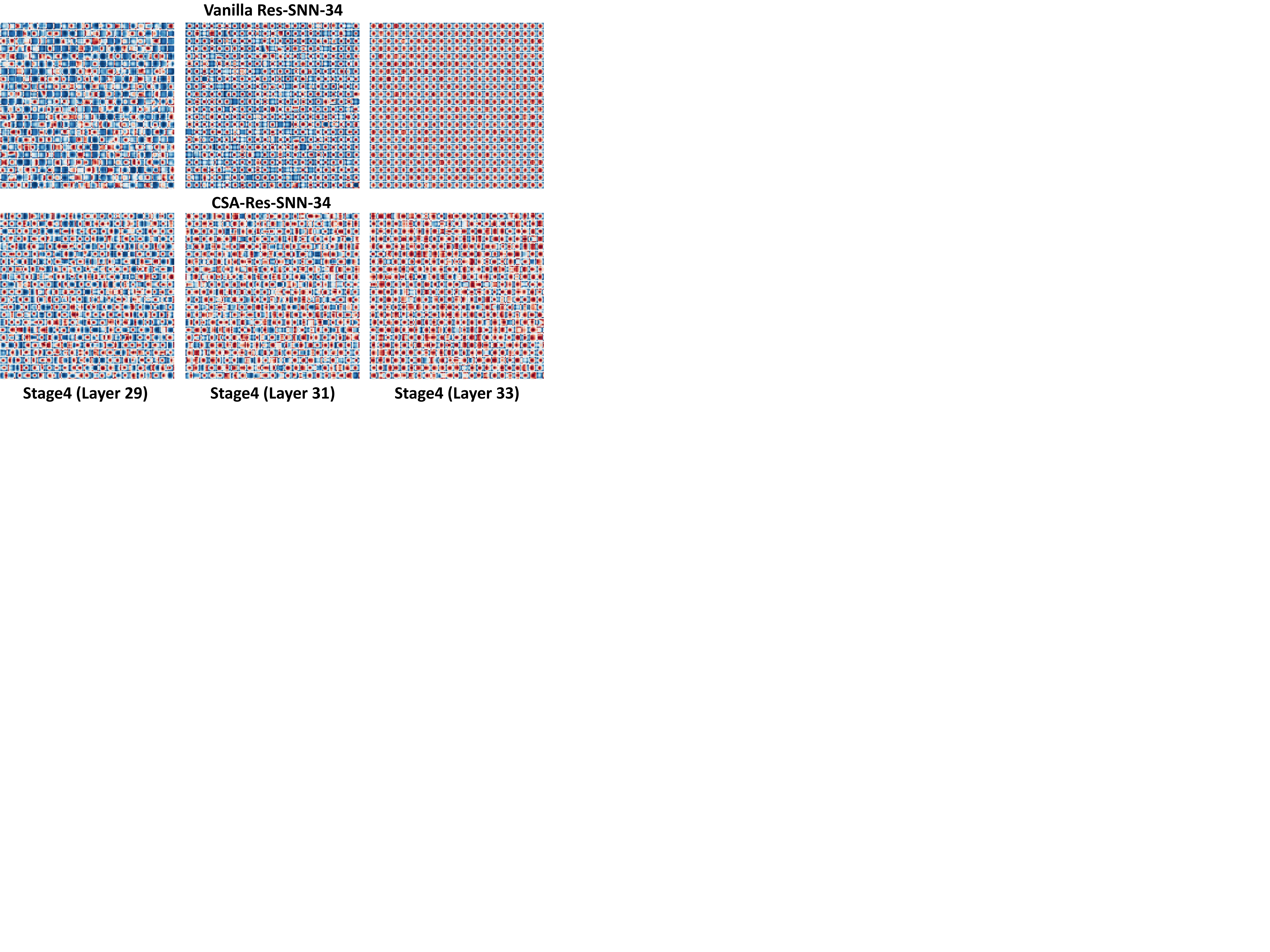}}
\caption{Visualization of overall spiking response on ImageNet-1K. (a) Earlier stage. In contrast to the vanilla model, some dark blue (all neurons in these channels are not spiking at all) channels appear in the stage 1 of CSA-Res-SNN, which induces great energy efficiency. For example, there are 17, 13, 21 dark blue channels in layer 2, 4, and 6 respectively. (b) Last stage. With the assistance of attention, features (red regions) of the last Conv layer (layer 33) become more diverse, which helps to enhance the class selectivity of the deep model.} 
\label{Fig:Attention_Feature_Map_ImageNet}
\end{figure}

\textbf{Analysis of overall spiking response on ImageNet-1K.} Each sample in the Gait is a person walking in front of a DVS camera, the only difference being that each person's gait was different. So in this single task dataset, we can easily observe the effect of attention on shallow plain SNNs. By contrast, when we compute the $\overline{\boldsymbol{S}}^{t, n}$ over large-scale datasets such as ImageNet-1K with 1,000 categories, it is difficult to observe specific objects in the channels. Nevertheless, we execute our ASRV method on ImageNet-1K and give the visualization results in Fig.~\ref{Fig:Attention_Feature_Map_ImageNet}, and still can observe the suppression of background channels caused by attention in deep SNNs. It is well known that at the early stage, the filters of deep networks tend to extract class-agnostic low-level features while extracting class-specific high-level features at the last stage. To investigate the effect of attention on low-level and high-level feature extraction of deep SNNs, we plot average spiking features at stage 1 (layers 2, 4, and 6 with 64 channels) and stage 4 (layers 29, 31, and 33 with 512 channels). 

We first observe stage 1 (earlier stage). The spiking response of the SNN is averaged from 50,000 samples. It is hard to see specific objects in each channel. Here we focus on the shift of spiking response caused by attention. In contrast to the baseline model, some dark blue channels appear in CSA-Res-SNN. For example, there are 17, 13, and 21 dark blue channels in layers 2, 4, and 6, respectively. We check all these dark blue channels carefully, finding that all neurons in these channels have a firing rate of 0. That is to say, these dark blue channels are suppressed to zero by attention. This phenomenon is interesting, which indicates attention can suppress some low-level features (probably useless background noise information) and induces sparser spiking activity at the early stage of deep SNNs. 

Then we observe stage 4 (last stage). At great depth, we see that the types of objects (red region in a channel) within each channel become diverse in Att-Res-SNN-34. Especially in the last Conv layer (layer 33), the red regions of objects in the channel of the attention model are significantly richer than the vanilla model counterpart. Previous works\cite{zhou_2016_CAM,SE_PAMI} demonstrate that features at the last Conv layer are critical to correct classification. Namely, earlier layer (e.g., stage 1) features are typically more general and re-used within the network. While the later layer (e.g., stage 4) features exhibit great levels of specificity. We conjecture that the attention module helps class selectivity by diversifying features of the last Conv layer, which brings a significant performance gain to Res-SNN-34 (+5.0 percent). 

\subsection{Spiking Response of Attention SNNs}
We plot the spiking response of vanilla SNN and TCA-SNN on Gait (Fig.~S1 in Section~S4 of SM), and we observe that the NASR of vanilla SNN is almost unchanged at each time step, which means SNN responds similarly to various inputs. This phenomenon is unreasonable because event streams are sparse and non-uniform\cite{yao_2021_TASNN}. With the help of data-dependent attention, the NSAR of TCA-SNN is uneven and small at the temporal axis, which induces a much lower NASAR than vanilla SNN. Similar experimental results could also be found in Gesture. Furthermore, we count the NASAR values of vanilla Res-SNN and Att-Res-SNN on ImageNet-1K with $T=1$. We find that the variance of NASAR is very small in vanilla Res-SNN, indicating the network's response to different images is almost invariant. In contrast, the variance of NSAR in Att-Res-SNN is more prominent. These observations demonstrate that attention can produce instance-specific dynamic responses.

Actually, making the spiking activity of SNNs sparser is always a fascinating topic because the human brain is a paragon model of sparse and efficiency\cite{Nature_2}. Current SNN models mainly drop NASAR by activity regularization\cite{deng_TNNLS_2021} or network compression\cite{lien_2022_sparse_compress}. But forcing sparsity too much may hurt predictive performance for an equal number of neurons since the effective capacity of the model might be reduced\cite{glorot_deep_sparse_2011}. Thus, in these \emph{parameter regularization} methods, the reduction of NASAR is limited, or the performance only holds a slight improvement. By contrast, our strategy is to optimize the membrane potential of spiking neurons in a \emph{data-dependent} way. We argue that reasonable membrane potential optimization based on specific features can naturally induce sparser spiking activity and better performance of SNNs concurrently.

\section{Conclusion}\label{section:conclusion}
In this work, we propose a lightweight attention module for SNNs, named multi-dimensional attention (MA), to boost both the performance and energy efficiency of SNNs. MA module is a plug-and-play that can be easily implemented and integrated with existing Conv-based SNNs. Inspired by attention theories in the neuroscience, our module optimizes the membrane potential of spiking neurons by learning when, what and where to focus and suppress through three separate pathways, which in turn drops the spiking activity and improves the performance. A wide range of experiments show the effectiveness and efficiency of MA, which achieve state-of-the-art performance and significant energy efficiency across multiple datasets and tasks, including event-based DVS128 Gesture/Gait and ImageNet-1K. We analyze how and why sparser spiking activity caused by attention is better, by visualizing the spiking response of vanilla and attention SNNs. We argue that effectiveness and efficiency exist as two sides of an elegant coin that should go hand in hand, which can be naturally symbiotic in SNNs, like coexistence in the human brain. The attention mechanisms can help us achieve this point.







\ifCLASSOPTIONcompsoc
  \section*{Acknowledgments}
This work was partially supported by Beijing Natural Science Foundation for Distinguished Young Scholars (JQ21015) and National Key R\&D Program of China (2018AAA0102600), and Beijing Academy of Artificial Intelligence (BAAI) and Pengcheng Lab.
  
\else
  \section*{Acknowledgment}
\fi

\bibliographystyle{IEEETran}
\bibliography{./ref}

\appendices

\ifCLASSOPTIONcaptionsoff
  \newpage
\fi

\end{document}